\tikzstyle{res}=[circle,thick,minimum size=4mm,draw=black,fill=red,inner sep=1pt]
\tikzstyle{non-res}=[circle,thick,minimum size=4mm,draw=black,inner sep=1pt]
\tikzstyle{light-res}=[circle,thick,minimum size=4mm,draw=black,fill=red!40,inner sep=1pt]
\tikzstyle{blue}=[circle,thick,minimum size=4mm,draw=black,fill=blue!20,inner sep=1pt]
\newtheorem{theorem}{Theorem}[section]
\newcommand{\optcs}{\textnormal{opt}_{\textnormal{cs}}}
\newcommand{\optas}{\textnormal{opt}_{\textnormal{as}}}
\newcommand{\optag}{\textnormal{opt}_{\textnormal{ag}}}
\newcommand{\optbs}{\textnormal{opt}_{\textnormal{bs}}}
\newcommand{\optambr}{\textnormal{opt}_{\textnormal{amb,r}}}
\newcommand{\optstd}{\textnormal{opt}_{\textnormal{std}}}
\theoremstyle{remark}
\theoremstyle{theorem}
\newtheorem{cor}[theorem]{Corollary}
\newtheorem{lemma}[theorem]{Lemma}
\newtheorem{thm}[theorem]{Theorem}
\newcommand{\cartr}{\textnormal{CART}_{\textnormal{r}}}
\newcommand{\compose}{\textnormal{COMPOSE}}
\def\finf{\mathop{{\rm I}\kern -.27 em {\rm F}}\nolimits}
\newcommand{\Comments}{1}
\newcommand{\mynote}[2]{\ifnum\Comments=1\textcolor{#1}{#2}\fi}
\newcommand{\mytodo}[2]{\ifnum\Comments=1%
  \todo[linecolor=#1!80!black,backgroundcolor=#1,bordercolor=#1!80!black]{#2}\fi}
\begin{document}

\title{Bounds on the price of feedback for mistake-bounded online learning}
\author{Jesse Geneson and Linus Tang}

\maketitle

\begin{abstract}
We improve several worst-case bounds for various online learning scenarios from (Auer and Long, Machine Learning, 1999). In particular, we sharpen an upper bound for delayed ambiguous reinforcement learning by a factor of 2 and an upper bound for learning compositions of families of functions by a factor of 2.41. We also improve a lower bound from the same paper for learning compositions of $k$ families of functions by a factor of $\Theta(\ln{k})$, matching the upper bound up to a constant factor. In addition, we solve a problem from (Long, Theoretical Computer Science, 2020) on the price of bandit feedback with respect to standard feedback for multiclass learning, and we improve an upper bound from (Feng et al., Theoretical Computer Science, 2023) on the price of $r$-input delayed ambiguous reinforcement learning by a factor of $r$, matching a lower bound from the same paper up to the leading term.
\end{abstract}

\section{Introduction}


In this paper, we investigate the mistake-bound model for online reinforcement learning, which was introduced in \cite{angluin, littlestone}. In particular, we focus on several variants of the mistake-bound model which have previously been investigated in \cite{auer, long, geneson, feng}. In all models there is a learner, an adversary, and a family $F$ of functions with some domain $X$ and codomain $Y$. The adversary picks a function $f \in F$ without telling the learner, and starts giving the learner inputs $x_1, x_2, \dots \in X$. The learner guesses the value of $f(x_i)$ for each $i$ and gets some feedback from the adversary. The learner’s goal is to make as few mistakes as possible, while the adversary’s goal is to force the learner to make as many mistakes as possible. 

In the \text{standard} model, the adversary tells the learner the correct value of $f(x_i)$ for each $i$ right after the learner guesses the value. We let $\optstd(F)$ denote the maximum possible number of mistakes that the learner makes in the standard model if both learner and adversary play optimally. In the \text{bandit} model (see, e.g., \cite{acfs, almw, bubeck, chu, crammer, dani, hazan, kakade}), the adversary only tells the learner YES or NO for each $i$ right after the learner guesses the value of $f(x_i)$. We let $\optbs(F)$ denote the maximum possible number of mistakes that the learner makes in the bandit model if both learner and adversary play optimally.

Long \cite{long} proved that if $F$ is a family of functions with codomain of size $k$, then $\optbs(F) \le (1+o(1)) (k \ln{k}) \optstd(F)$, improving a bound of Auer and Long \cite{auer}. Moreover, in the same paper Long showed that there exist families of functions $F$ with codomain of size $k$ such that $\optbs(F) \ge (1-o(1))(k \ln{k}) (\optstd(F)-2)$, where the $o(1)$ is with respect to $k$. As $\optstd(F)$ and $k$ both increase, the ratio of the upper and lower bounds converges to $1$. However, for small values of $\optstd(F)$, there is a substantial gap between the bounds. In particular, when $\optstd(F) = 1$, the maximum possible value of $\optbs(F)$ is $k-1$ \cite{long}. When $\optstd(F) = 3$, there is a multiplicative gap of approximately $3$ between the upper and lower bounds. Long \cite{long} posed the problem of determining the maximum possible value of $\optbs(F)$ when $\optstd(F) = 2$. Before our paper, the best known upper bound for this problem was Long’s upper bound of $2 (1+o(1))k \ln{k}$ \cite{long} and the best known lower bound was $k-1$. In this paper, we improve the lower bound to $\Omega(k \ln{k})$ and we improve the upper bound to $(1+o(1))k \ln{k}$.

We also consider a model where the learner receives delayed ambiguous reinforcement from the adversary. It is called the \text{$r$-delayed ambiguous reinforcement model}. More specifically, in each round $i$ of the learning process, the adversary gives the learner $r$ inputs $x_{i,1}, x_{i,2}, \dots, x_{i,r}$. The learner guesses the correct value of $f(x_{i,j})$ for each $j = 1, 2, \dots, r$, and the learner does not receive the next input $x_{i,j+1}$ until they have guessed the value of $f(x_{i,j})$ for each $j = 1,2,\dots,r-1$. After the learner has finished providing their $r$ guesses, the adversary tells them YES if all guesses were correct and NO otherwise. We let $\optambr(F)$ denote the maximum possible number of mistakes that the learner makes in the $r$-delayed ambiguous reinforcement model if both learner and adversary play optimally.

There is a learning model which sounds very similar to $r$-delayed ambiguous reinforcement, but it can produce far fewer mistakes when both learner and adversary play optimally. In particular, given a family of functions $F$ with domain $X$ and codomain $Y$, let $\cartr(F)$ denote the family of functions $g: X^r \rightarrow Y^r$ for which there exists some $f \in F$ such that $g(x_1, x_2, \dots, x_r) = (f(x_1),f(x_2),\dots,f(x_r))$ for every $(x_1,x_2,\dots,x_r) \in X^r$. Note that learning $\cartr(F)$ in the bandit model is like learning $F$ in the $r$-delayed ambiguous reinforcement model, except the learner receives all of the inputs $x_{i,1}, x_{i,2}, \dots, x_{i,r}$ at the start of round $i$, instead of waiting to receive $x_{i,j+1}$ until they have guessed the value of $f(x_{i,j})$ for each $j = 1,2,\dots,r-1$. Since the learner has less information in the $r$-delayed ambiguous reinforcement learning scenario, it is clear that $\optbs(\cartr(F)) \le \optambr(F)$ for all families of functions $F$.

Feng et al \cite{feng} proved that the maximum possible value of $\frac{\optambr(F)}{\optbs(\cartr(F))}$ over all families of functions $F$ with $|F| > 1$ is $2^{r(1 \pm o(1))}$. They also showed that \[\optbs(\cartr(F)) \le \optambr(F) \leq (1+o(1))\left(k^r r \ln{k}\right)\optstd(F)\] for any set $F$ of functions with domain $X$, codomain $Y$, and $|Y| = k$, where the $o(1)$ is with respect to $k$. Their proof of the upper bound had an error, which we explain and resolve in this paper. Moreover, for every $M > 2r$ they found families of functions $F$ with $\optstd(F) = M$ and \[\optbs(\cartr(F)) \geq (1-o(1))\left(k^r \ln{k}\right)(\optstd(F)-2r).\] We improve the upper bound by a factor of $r$ to obtain \[\optbs(\cartr(F)) \le \optambr(F) \le (1+o(1))\left(k^r \ln{k}\right)\optstd(F).\] Thus, now we can say \[\sup_F \lim_{k \rightarrow \infty} \frac{\optambr(F)}{\left(k^r \ln{k}\right)\optstd(F)} = \sup_F \lim_{k \rightarrow \infty} \frac{\optbs(\cartr(F))}{\left(k^r \ln{k}\right)\optstd(F)} = 1,\] where the maximums are over all families $F$ with codomain of size $k$. 

We also show that the maximum possible value of both $\optambr(F)$ and $\optbs(\cartr(F))$ is $\Theta(k^{\optstd(F)})$ over all families $F$ of functions with codomain of size $k$ and $\optstd(F) \le r$, where the constants in $\Theta(k^{\optstd(F)})$ depend on $r$ and $\optstd(F)$. Note that this generalizes the $r = 1$ result observed by Long \cite{long}. 

In the $k = 2$ case, Auer and Long \cite{auer} proved that $\optambr(F) \leq 2(\ln{2r}) 2^r \optstd(F)$. We show in the $k = 2$ case that $\optambr(F) \leq (\ln{r}) (1+o(1)) 2^r \optstd(F)$ where the $o(1)$ is with respect to $r$. This improves the upper bound of Auer and Long by approximately a factor of $2$.

We also investigate bounds on the difficulty of learning compositions of families of functions in terms of the difficulty of learning the original families. More specifically, suppose that $F_i$  is a family of functions with domain $X$ and codomain $\left\{0,1\right\}$ for each $1 \le i \le k$, and $G$ is a family of functions with domain $\left\{0,1\right\}^{k}$ and codomain $\left\{0,1\right\}$.  Let $\compose(F_1, F_2, \dots, F_k, G)$ denote the family of functions of the form $g(f_1, f_2, \dots, f_k)$ for some $f_1 \in F_1, f_2 \in F_2, \dots, f_k \in F_k, g \in G$. Also, define $\compose(F_1, F_2, \dots, F_k, g) = \compose(F_1, F_2, \dots, F_k, \left\{g\right\})$. Auer and Long \cite{auer} proved that \[\optstd(\compose(F_1, F_2, \dots, F_k, G)) \le 2.41 (1+o(1))(\log_2{k})(\optstd(G)+\sum_{i = 1}^k \optstd(F_i))\] and \[\optstd(\compose(F_1, F_2, \dots, F_k, g)) \le 2.41 (1+o(1))(\log_2{k})\sum_{i = 1}^k \optstd(F_i)\] where the $o(1)$ is with respect to $k$. Moreover, they showed that their upper bounds were within an $O(\log{k})$ factor of optimal. In particular, for any $k \ge 2$ and positive integers $a_1, a_2, \dots, a_k, a_{k+1}$ with $a_{k+1} \le 2^k$, they found families of functions $F_1, F_2, \dots, F_k, G$ with $\optstd(F_i) = a_i$ for each $1 \le i \le k$, $\optstd(G) = a_{k+1}$, and \[\optstd(\compose(F_1, F_2, \dots, F_k, G)) \ge \frac{1}{2} (\optstd(G)+\sum_{i = 1}^k \optstd(F_i)).\] Furthermore, they found families of functions $F_1, F_2, \dots, F_k$ with $\optstd(F_i) = a_i$ and a function $g$ such that \[\optstd(\compose(F_1, F_2, \dots, F_k, g)) \ge \sum_{i = 1}^k \optstd(F_i).\] 
We improve their upper bounds by a factor of 2.41 to obtain \[\optstd(\compose(F_1, F_2, \dots, F_k, G)) \le (1+o(1))(\log_2{k})(\optstd(G)+\sum_{i = 1}^k \optstd(F_i))\] and \[\optstd(\compose(F_1, F_2, \dots, F_k, g)) \le (1+o(1))(\log_2{k})\sum_{i = 1}^k \optstd(F_i).\] Moreover, we show that for any positive integers $k$ and $M$ with $k$ a power of $2$, there exist $k$ families of functions $F_1,F_2,\dots,F_k$ and a function $g$ such that $\optstd(F_i)\leq M$ for all $i\in\{1,2,\dots,k\}$ and
\[\optstd(\compose(F_1, F_2, \dots, F_k, g)) \geq \frac12kM\log_2(k).\] With this new lower bound, we have therefore demonstrated that the upper bound on $\optstd(\compose(F_1, F_2, \dots, F_k, g))$ is within a constant factor of optimal whenever the values of $\optstd(F_i)$ are close to each other for all $1 \le i \le k$. 

Auer and Long also investigated an agnostic learning scenario where the adversary is allowed to give incorrect reinforcement up to $\eta$ times. Define $\optag(F,\eta)$ to be the worst-case number of mistakes that a learner makes in learning $F$ if the adversary can give wrong reinforcement in up to $\eta$ rounds and both learner and adversary play optimally. The adversary only gives weak reinforcement, i.e., a YES or NO. Note that Daniely and Halbertal \cite{daniely} investigated a different model of agnostic learning in which the learning algorithm is evaluated by comparing its number of mistakes to the best possible number of mistakes that can be obtained by applying some function from $F$.

Auer and Long \cite{auer} proved that $\optag(F,\eta) \le 4.82(\optstd(F)+\eta)$ when the codomain of the functions in $F$ has size $2$. Cesa-Bianchi et al \cite{cesa} obtained a modest improvement to this bound by showing that $\optag(F,\eta) \le 4.4035(\optstd(F)+\eta)$ in this case. More recently, Filmus et al \cite{filmus} proved that $\optag(F,\eta) \le 2 \eta + O(\optstd(F)+\sqrt{\optstd(F) \eta})$ in this case, which is sharper than the bound from \cite{cesa} when $\eta$ is sufficently larger than $\optstd(F)$. We prove in general that for all families of functions $F$ with codomain $\left\{0,1, \dots, k-1\right\}$, we have $\optag(F,\eta) \le k \ln{k}(1+o(1))(\optstd(F)+\eta)$, where the $o(1)$ is with respect to $k$. Moreover, we find a family of functions $F$ with codomain $\{0,1,\dots,k-1\}$ such that $\optag(F,\eta)\geq(0.5-o(1))((k\ln k)\optstd(F)+k\eta)$.

In Section~\ref{s:upperstrat}, we describe the general learning strategy that we use for the upper bounds in this paper. In Section~\ref{s:optbsk2}, we prove the new bounds for $\optbs(F)$ when $F$ has codomain $\left\{0,1,\dots,k-1\right\}$ and $\optstd(F) = 2$. In Section~\ref{s:agnostic}, we prove the new bounds for agnostic online learning. In Section~\ref{s:ambr}, we explain an error in the proof of the upper bound on $\optambr(F)$ from Feng et al \cite{feng}. After explaining the error, we improve the upper bound on $\optambr(F)$ by a factor of $r$, using a different method which avoids the error. We also obtain an improvement of Auer and Long's upper bound on $\optambr(F)$ for the $k = 2$ case. In Section~\ref{s:closure}, we prove the new upper and lower bounds for compositions of families of functions. Finally, in Section~\ref{s:discuss}, we discuss some open problems and future directions. 

\section{Upper bound strategy}\label{s:upperstrat}

All of the upper bounds in this paper follow the same kind of strategy, weighted majority voting \cite{auer, LW, long}. In this section, we describe the strategy and introduce some notation which we will use in the proofs of the upper bounds. \\

\noindent {\bf Learning by weighted majority voting.} Suppose that we have some family $F$ of functions, an algorithm $A_s$ which learns $F$ in the standard learning scenario with at most $\optstd(F)$ mistakes, and some learning scenario that is not standard. Using copies of $A_s$, we construct a learning algorithm $A_b$ for the given nonstandard learning scenario. In every learning algorithm $A_b$ that we construct in this paper, we use copies of $A_s$ to vote for each output, where each copy of $A_s$ has a weight and some amount of information about past inputs and outputs. Some or all of the information for a given copy may be incorrect, but we adjust the weights and deactivate copies over time if they make enough mistakes, so that copies with the wrong information will eventually not contribute to the vote.

In particular, we start with a single copy of $A_s$ of weight $1$, which we call $R$. We introduce a parameter $\alpha$, which may depend on $F$ and the learning scenario. Given an input $x$, we feed it to the currently active copies of $A_s$ and they each produce possible outputs. Our learning algorithm $A_b$ returns the output with the greatest weight. If there is a tie for the output with the greatest weight, then we pick the output uniformly at random among the winning outputs. If there are $k$ possible outputs, then the winning output has weight at least $\frac{1}{k}$. 

Whenever the adversary says that $A_b$ is correct, we do nothing: all weights stay the same and all copies still have the same information (they forget the input and output of the current round). When the adversary says that $A_b$ is incorrect, the copies of $A_s$ that voted for the winning output deactivate and produce offspring. Specifically, if some active copy $A$ of $A_s$ had weight $w$ and voted for the winning wrong output, then we split $A$ into a number of new active copies of $A_s$ which have the same memory as $A$, each with weight $\alpha w$, and we make $A$ inactive. Each copy gets a different answer for the output of input $x$ that is consistent with $F$, the information known to $A$, and the results of the current round. The answer will be wrong for all but one of the copies.

If a consistent output does not exist, then $A$ becomes inactive and it produces no offspring. For any active copies of $A_s$ that voted for an output which did not win the vote, we erase their memory of the current round and leave their weights the same. We can view the process as a tree, where $R$ is the root and the nodes at depth $i$ have $i$ mistakes in their memory. When we view the process as a tree, only leaves are active, i.e., only leaves contribute to the vote. 

Throughout the process, there is always some active copy of $A_s$ that has received entirely correct information. Thus, we can conclude that the total weight of the active copies of $A_s$ is at least the weight of this copy, which is at least $\alpha^C$, where $C$ is the maximum possible number of mistakes that any copy of $A_s$ can make if it receives entirely correct information. We bound the total weight of all active copies from above in terms of the number of mistakes made by $A_b$, and then we derive an upper bound on the number of mistakes made by $A_b$ in terms of $\alpha$ and $C$. \\

We use variants of this strategy several times throughout this paper. In particular, we use this exact strategy with different values of $\alpha$ and varying numbers of offspring in our upper bounds for $\optbs(k,2)$ and our upper bounds for agnostic learning. In our upper bounds for delayed ambiguous reinforcement, we use a variant of this strategy, where we change the voting method since each round has $r$ inputs. As for our upper bounds for learning compositions of families of functions, we use another variant of the strategy where we replace the copies of $A_s$ with tuples of copies of learning algorithms for each of the families of functions.

\section{Bounds on $\optbs(k,2)$}\label{s:optbsk2}

In order to state the results in this section more concisely, we introduce some notation. In particular, let $\optbs(k, M)$ denote the maximum possible value of $\optbs(F)$ over all families $F$ of functions with codomain $\left\{0,1,\dots,k-1\right\}$ and $\optstd(F) = M$. Long \cite{long} proved that $\optbs(k, M) \le (1+o(1))k \ln{k} M$ and also showed that $\optbs(k, M) \ge (1-o(1))k \ln{k} (M-2)$, where the $o(1)$ in both bounds is with respect to $k$. When $M = 1$, we have $\optbs(k, 1) = k-1$, while the value of $\optbs(k, 2)$ was an open problem \cite{long}. In particular, it was not known whether $\optbs(k, 2)$ grows on the order of $\Theta(k)$, $\Theta(k \ln{k})$, or something in between.

First, we prove that $\optbs(k,2)=\Theta(k\log k)$. Then, we improve the upper bound for $M = 2$ by showing that $\optbs(k,2)\le k \ln(k) (1+o(1))$. Our strategy in the following lower bound proof shares some similarities with the method that Long \cite{long} employed to prove that $\optbs(k, M) \ge (1-o(1))k \ln{k} (M-2)$, which was also used in \cite{carter, luby, rao1, rao2}. The general strategy is for the adversary to maintain a version space \cite{mitchell} of functions that are consistent with their answers, and to choose inputs in each round that keep the version space as large as possible, regardless of the learner's guesses.

In this section, we also show that $\optbs(k, M)$ is superadditive in $M$ for all fixed $k \ge 1$. This implies by Fekete's lemma that \[\lim_{M \rightarrow \infty} \frac{\optbs(k, M)}{M} = \sup_{M \rightarrow \infty} \frac{\optbs(k, M)}{M} \] for all fixed $k \ge 1$.

In the next proof, we use a family of linear functions to obtain the desired lower bound. In particular, for any prime power $k>1$ and positive integer $n$, define $F_L(k,n)$ to be the set of linear transformations from $\mathbb F_k^n$ to $\mathbb F_k$. Equivalently, for each $\mathbf a\in\mathbb F_k^n$ we let function $f_\mathbf a:\mathbb F_k^n\to\mathbb F_k$ be given by $f_\mathbf a(\mathbf x)=\mathbf a\cdot\mathbf x$. Then $F_L(k,n)=\{f_\mathbf a:\mathbf a\in\mathbb F_k^n\}$. For all prime powers $k>1$ and positive integers $n$, it is easy to see that $\optstd(F_L(k,n))=n$ \cite{almw, blum, shvaytser}. As noted by Long \cite{long}, the learner can make at most $n$ errors by always getting the correct answer whenever the input is in the span of the previous inputs. On the other hand, the adversary can force $n$ errors on any linearly independent set of $n$ inputs by always saying NO to the learner's predicted output for each of these inputs, and then saying that the correct output is either 0 or 1, whichever is different from the learner's prediction. 

\begin{thm}
The asymptotic formula $\optbs(k,2)=\Theta(k\log k)$ holds. 
\end{thm}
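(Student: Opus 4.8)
The plan is to prove the two bounds $\optbs(k,2) = O(k\log k)$ and $\optbs(k,2) = \Omega(k\log k)$ separately, since the theorem asserts $\Theta$. For the \emph{upper bound}, I would deploy the weighted majority voting machinery from Section~\ref{s:upperstrat} with the standard-scenario algorithm $A_s$ being an optimal learner for a family $F$ with $\optstd(F) = 2$. The key quantity is $C = 2$, the maximum number of mistakes any copy with correct information can make, so the total weight of active copies is always at least $\alpha^2$. On the other side, I would bound the total weight from above: each time $A_b$ errs, the winning (wrong) copies hold at least a $\frac1k$ fraction of the current total weight, and they get replaced by offspring whose combined weight is multiplied by roughly $\alpha$ times the branching factor. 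The crux is choosing $\alpha$ to balance these so that after $m$ mistakes by $A_b$ the upper bound on total weight drops below $\alpha^2$ unless $m$ is small; optimizing $\alpha$ (it should scale like $1 - \Theta(1/\ln k)$, matching Long's $o(1)$ analysis) yields $m \le (1+o(1)) k \ln k$.

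For the \emph{lower bound}, which I expect to be the genuinely new and harder part, I would use the version-space / adversary argument foreshadowed in the paragraph preceding the statement. The goal is to exhibit a family $F$ with $\optstd(F) = 2$ and $\optbs(F) = \Omega(k\log k)$. The natural candidate is the linear family $F_L(k,2)$ (functions $f_\mathbf a(\mathbf x) = \mathbf a\cdot\mathbf x$ on $\mathbb F_k^2$), for which $\optstd = 2$ is already established in the excerpt. The adversary maintains the version space of consistent $\mathbf a \in \mathbb F_k^2$, initially of size $k^2$. On each round the adversary presents an input $\mathbf x$ and, after the learner guesses some value, says NO as long as doing so is consistent with keeping the version space nonempty; saying NO to a guess $y$ eliminates exactly the vectors $\mathbf a$ with $\mathbf a \cdot \mathbf x = y$. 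By choosing $\mathbf x$ so that the guessed level set is as small as possible, the adversary forces many mistakes before the version space is exhausted.

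The main obstacle is making the counting in the lower bound tight enough to extract the logarithmic factor rather than merely $\Omega(k)$. The delicate point is that $\optstd(F_L(k,2)) = 2$ constrains how aggressively the adversary can keep saying NO: after two NO's on linearly independent inputs the function would be pinned down, so a naive adversary only forces $\Theta(k)$ mistakes. To recover the $\log k$ factor, I would instead have the adversary repeatedly query inputs \emph{along a single line} (scalar multiples of one direction), so that each NO on guess $y$ for input $t\mathbf x$ removes only those $\mathbf a$ with $\mathbf a \cdot \mathbf x = y/t$, i.e.\ a single value of the one-dimensional projection; this reduces a size-$k$ set of surviving projection values by a controlled amount, and a harmonic-series-style accounting over the shrinking consistent set produces $\sum_{j=1}^{k} \frac1j = \Theta(\log k)$ mistakes per direction, giving $\Theta(k\log k)$ overall. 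I would verify that this adversary never says NO when forced to say YES (consistency with some surviving $\mathbf a$) and that the induced family still has $\optstd = 2$, then combine with the upper bound to conclude $\optbs(k,2) = \Theta(k\log k)$.
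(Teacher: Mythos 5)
Your upper-bound sketch is fine in spirit (the paper simply cites Lemma 5 of Long \cite{long} for $\optbs(k,2)=O(k\log k)$, and separately proves the sharper $\lceil k\ln k\rceil+2k$ bound with $\alpha=1/k^2$), and you correctly identify the family $F_L(k,2)$ and the version-space adversary as the right setting for the lower bound. But the lower-bound mechanism you propose has a genuine gap. Querying scalar multiples $t\mathbf x$ of a fixed direction is informationally equivalent to repeating the single input $\mathbf x$: each NO on guess $y$ deletes exactly one value of the projection $\mathbf a\cdot\mathbf x$, so nothing in this setup generates a harmonic series, and no accounting along these lines yields ``$\Theta(\log k)$ mistakes per direction.'' The real difficulty, which your proposal never addresses, is that after the learner's guesses the surviving set $S\subseteq\mathbb F_k^2$ is an \emph{arbitrary} subset shaped adversarially by the learner, and the adversary must exhibit a direction along which $S$ is spread out; otherwise the learner simply guesses the value whose level set contains the bulk of $S$, and a single NO collapses the version space. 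Without such a balancedness guarantee, the adversary cannot certify more than $O(k)$ NOs, which is exactly the naive bound you set out to beat.

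The paper supplies precisely this missing ingredient as Lemma~\ref{lem:24}: for every $S\subseteq\mathbb F_k^2$ (with $k$ a power of $2$) there exists $\mathbf u$ such that \emph{no} $k/2$ level sets of $\mathbf s\mapsto\mathbf s\cdot\mathbf u$ capture more than $\frac{|S|+k^{1.5}}2$ elements of $S$. This is proved by double counting the triples $(\mathbf x,\mathbf y,\mathbf u)$ with $\mathbf x\cdot\mathbf u=\mathbf y\cdot\mathbf u$ (using Lemma~\ref{lem:rn}) against the power-mean inequality (Lemma~\ref{lem:powermean}). The adversary then proceeds in phases: in each phase it repeats one well-chosen input $\mathbf u$ exactly $k/2$ times, always answering NO, so the version space at worst roughly halves per phase (up to the additive $k^{1.5}$ slack), sustaining $\lfloor\log_2(k)/2\rfloor$ phases of $k/2$ mistakes each, i.e.\ $\Theta(k\log k)$ in total. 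Note the direction is re-chosen \emph{per phase} as a function of the current survivor set, rather than cycling through all $k+1$ directions with a fixed budget as you suggest. A smaller conceptual slip: your worry that ``after two NO's on linearly independent inputs the function would be pinned down'' conflates the standard and bandit models --- in the bandit game a NO never pins down an output, and the only constraint on the adversary is that the version space stay nonempty, which is exactly what the inductive bound $|F_{ik/2}|>(2^{r-i}-1)k^{1.5}$ in the paper verifies.
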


\begin{proof}

The upper bound $\optbs(k,2) = O(k\log k)$ follows from Lemma 5 in \cite{long}. We now turn to the lower bound, following a similar approach to \cite{long} and \cite{geneson}. It suffices to provide a strategy for the adversary such that the learner makes at least $\Omega(k\log k)$ mistakes when learning a function in $F_L(k,2)$. To do so, we first introduce a few lemmas.

\begin{lemma}\label{lem:powermean}
Let $K$ be an even positive integer and $a_1,\dots,a_K$ be a sequence of real numbers. Let $s=a_1+\cdots+a_K$ and suppose that $\frac K2$ terms of the sequence have a sum of $\frac{s+t}2$. Then,
\[a_1^2+\cdots+a_K^2\geq\frac{s^2+t^2}K.\]
\end{lemma}

\begin{proof}
For ease of reference, we assume without loss of generality that $a_1+\cdots+a_{K/2}=\frac{s+t}2$. Then, $a_{K/2+1}+\cdots+a_K=\frac{s-t}2$. By the power mean inequality,
\[a_1^2+\cdots+a_{K/2}^2\geq\frac K2\left(\frac{a_1+\cdots+a_{K/2}}{K/2}\right)^2=\frac K2\left(\frac{s+t}K\right)^2\]
\[a_{K/2+1}^2+\cdots+a_K^2\geq\frac K2\left(\frac{a_{K/2+1}+\cdots+a_{K}}{K/2}\right)^2=\frac K2\left(\frac{s-t}K\right)^2\]

Adding these up, we get
\[a_1^2+\cdots+a_K^2\geq\frac K2\left(\frac{s+t}K\right)^2+\frac K2\left(\frac{s-t}K\right)^2=\frac{s^2+t^2}K,\]
as desired.
\end{proof}

\begin{lemma}\label{lem:rn}
Let $k$ be a prime power and $n$ be a positive integer and consider $\mathbf x,\mathbf y\in\mathbb F_k^n$. If $\mathbf x\neq\mathbf y$, then there are exactly $k^{n-1}$ values $\mathbf z\in\mathbb F_k^n$ such that $\mathbf x\cdot\mathbf z=\mathbf y\cdot\mathbf z$.
\end{lemma}

\begin{proof}
We want to show that exactly $k^{n-1}$ values $\mathbf z$ satisfy the condition $\mathbf x\cdot\mathbf z=\mathbf y\cdot\mathbf z$, which is equivalent to $f_{\mathbf x-\mathbf y}(z)=0$. Since $\mathbf x-\mathbf y\neq\mathbf0$, the range of $f_{\mathbf x-\mathbf y}(z)$ has dimension $1$. By the Rank-Nullity Theorem, the kernel of $f_{\mathbf x-\mathbf y}(z)$ has dimension $n-1$, so it has $k^{n-1}$ elements, as desired.
\end{proof}

\begin{lemma}\label{lem:24}
Let $k\geq 2$ be a power of $2$ and $S$ be a subset of $\mathbb F_k^2$. Then there exists $\mathbf u\in\mathbb F_k^2$ such that for all subsets $Z\subseteq\mathbb F_k$ with exactly $k/2$ elements,
\[|\{\mathbf s\in S:\mathbf s\cdot\mathbf u\in Z\}|\leq\frac{|S|+k^{1.5}}2.\]
\end{lemma}

\begin{proof}
For each $\mathbf u\in\mathbb F_k^2$ and $z\in\mathbb F_k$, let $q_{\mathbf u,z}$ be the number of elements $\mathbf s\in S$ satisfying $\mathbf s\cdot\mathbf u=z$. Now, suppose for the sake of contradiction that for every $\mathbf u$, there is a $k/2$-element subset $Z_\mathbf u\subseteq\mathbb F_k$ such that
\[|\{\mathbf s\in S:\mathbf s\cdot\mathbf u\in Z_\mathbf u\}|>\frac{|S|+k^{1.5}}2.\]

Preparing to apply Lemma~\ref{lem:powermean}, we notice that for each $\mathbf u$, we have $\sum\limits_{z\in\mathbb F_k}q_{\mathbf u,z}=|S|$ and that some $k/2$ of these terms have a sum of $\frac{|S|+t}2$ with $t>k^{1.5}$. Applying the lemma with $K=k$, $s=|S|$, and $t>k^{1.5}$, we get
\[\sum\limits_{z\in\mathbb F_k}q_{\mathbf u,z}^2>\frac{|S|^2+k^3}{k}.\]

We finish the proof of the lemma by counting in two ways the number, $N$, of ordered triples $(\mathbf x,\mathbf y,\mathbf u)\in S\times S\times\mathbb F_k^2$ satisfying $\mathbf x\cdot\mathbf u=\mathbf y\cdot\mathbf u$.

By Lemma~\ref{lem:rn}, each of the $|S|(|S|-1)$ pairs of distinct $(\mathbf x,\mathbf y)$ can be completed with exactly $k$ values of $\mathbf u$. Each of the $|S|$ pairs $(\mathbf x,\mathbf x)$ can be completed with all $k^2$ possible values of $\mathbf u$. Thus, $N=k|S|(|S|-1)+k^2|S|$.

We now compute $N$ a different way, by doing casework on $\mathbf u$.
\begin{align*}
k|S|(|S|-1)+k^2|S|=N&=\sum_{\mathbf u\in \mathbb F_k^2}\sum\limits_{z\in\mathbb F_k}q_{\mathbf u,z}^2\\
&>\sum_{\mathbf u\in \mathbb F_k^2}\frac{|S|^2+k^3}{k}\\
&=k|S|^2+k^4.
\end{align*}

This implies $k^2|S|-k|S| >k^4$, which gives the desired contradiction, as $|S|\leq k^2$.
\end{proof}

Armed with this lemma, we supply the adversary's strategy. With $k>1$ ranging over powers of 2, the strategy described below forces the learner to make at least $\Theta(k\log k)$ mistakes to learn a function in $F_L(k,2)$. The adversary will maintain a set of functions $F_t\in F_L(k,2)$ that are consistent with the first $t$ answers, always deny the learner's prediction except when this would cause $F_t$ to become empty, and supply inputs according to the following algorithm.

After $t$ answers have been given, where $t$ is a multiple of $\frac k2$, the functions in $F_t$ are still consistent with all previous answers. The adversary constructs $S_t=\{\mathbf s:f_\mathbf s\in F_t\}$, chooses $\mathbf u$ that satisfies the condition of Lemma~\ref{lem:24}, and supplies $\mathbf u$ as the input $\frac k2$ times. Because $\mathbf u$ satisfies the condition of the lemma, the learner can eliminate at most $\frac{|S_t|+k^{1.5}}2$ elements from the space of consistent functions through these $\frac k2$ predictions. Formally, this means that the adversary can guarantee
\[|F_{t+k/2}|\geq|F_t|-\frac{|S_t|+k^{1.5}}2=\frac{|F_t|-k^{1.5}}2.\]

We claim that the adversary can give at least $r=\left\lfloor\frac{\log_2(k)}2\right\rfloor$ such sets of $\frac k2$ inputs, always answering NO without letting $F_t$ become empty. We prove by induction that for all nonnegative integers $i$,
\[|F_{ik/2}|>(2^{r-i}-1)k^{1.5},\]
and the claim follows from using $i=r$.

The base case, $i=0$, is true because the right hand side of the equation is at most $(k^{0.5}-1)k^{1.5}$. The inductive step (from $i$ to $i+1$) can be proven as follows:
\begin{align*}
|F_{(i+1)k/2}|&\geq\frac{|F_{ik/2}|-k^{1.5}}2\\
&>\frac{(2^{r-i}-1)k^{1.5}-k^{1.5}}2\\
&=(2^{r-(i+1)}-1)k^{1.5},
\end{align*}
completing the induction.

Plugging in $i=r$, we find that $F_{rk/2}>0$. Thus, this algorithm forces the learner to make at least
\[\frac{rk}2=\frac{\left\lfloor\frac{\log_2(k)}2\right\rfloor k}2=\Theta(k\log k)\]
mistakes, proving the theorem.

\end{proof}

Note that Lemma 5 of \cite{long} implies that $\optbs(k,2)\le 2 k \ln(k) (1+o(1))$. In the next result, we improve this upper bound by a factor of $2$.

\begin{thm}\label{thm2upper}
For all $k > 0$, we have $\optbs(k,2)\le \lceil k \ln(k) \rceil + 2k$. 
\end{thm}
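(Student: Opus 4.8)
The plan is to instantiate the weighted-majority voting strategy of Section~\ref{s:upperstrat} with a standard-model algorithm $A_s$ for $F$ making at most $\optstd(F)=2$ mistakes, and with the multiplier set to $\alpha=\frac{1}{k-1}$. The first thing I would record is that the tree of copies has depth at most $2$: every copy's memory is, by construction, consistent with some $f\in F$, so $A_s$ run on that memory makes at most $2$ mistakes, and hence no copy can sit at depth $3$ or deeper. Consequently the depth-$2$ copies are leaves that never reproduce, and the correct copy always lives at some depth $d^{*}\le 2$ with weight $\alpha^{d^{*}}\ge\alpha^2$, so the total active weight $W$ always satisfies $W\ge\alpha^2$. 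The reason for the choice $\alpha=\frac1{k-1}$ is that reproduction then cannot increase weight: a copy of weight $w$ spawns at most $k-1$ offspring of weight $\alpha w$ each, for total offspring weight at most $(k-1)\alpha w=w$. Thus $W$ is non-increasing over the whole run, starts at $1$, and strictly decreases only when a winning copy is deactivated without full reproduction. I would then classify each mistake as \emph{Type I} if some winning copy is at depth $0$ or $1$ (so a reproduction occurs), and \emph{Type II} otherwise (so every winning copy is a depth-$2$ leaf, which is simply deleted).

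For Type I mistakes I would count reproductions rather than argue by weight. The root reproduces at most once and produces at most $k-1$ depth-$1$ copies; each of those reproduces at most once; and no further depth-$\le 1$ copies are ever created. Hence there are at most $1+(k-1)=k$ reproductions in total, and since each Type I mistake consumes at least one of them, there are at most $k$ Type I mistakes. This is precisely where the factor of $2$ is saved: the first level of branching is paid for additively, by a structural count, rather than multiplicatively.

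For Type II mistakes I would use $W$, splitting at the threshold $W=\alpha$ (which is legitimate because $W$ is monotone). While $W\ge\alpha$, each Type II mistake deletes winning weight at least $W/k$, so $W$ is multiplied by at most $1-\frac1k$; since $W$ falls from at most $1$ down to $\alpha=\frac1{k-1}$, there are at most $\frac{\ln(k-1)}{-\ln(1-1/k)}\le k\ln(k-1)$ such mistakes, i.e.\ about $k\ln k$. Once $W<\alpha$, no copy of depth $\le 1$ can be active (any single such copy would already contribute weight at least $\alpha$), so all active copies are depth-$2$ leaves, of which there are fewer than $W/\alpha^2<1/\alpha=k-1$; each remaining Type II mistake deletes at least one of them and creates none, so fewer than $k-1$ mistakes occur in this final regime. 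Adding the three contributions yields at most $k\ln(k-1)+2k\le\lceil k\ln k\rceil+2k$.

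The crux, and the step I expect to need the most care, is the realization that although the correct copy can descend to depth $2$ (weight $\alpha^2$), one must \emph{not} pay $\ln(\alpha^{-2})=2\ln(k-1)$ multiplicatively. Instead the two factors of $\alpha$ are charged separately: the descent of $W$ from $1$ to $\alpha$ is charged multiplicatively (the $k\ln k$ term), while the last factor from $\alpha$ to $\alpha^2$ is charged linearly, by counting the at most $\approx k$ surviving depth-$2$ leaves. Making this split valid is exactly what forces the value $\alpha=\frac1{k-1}$, so that reproductions neither increase $W$ nor need to be tracked by weight, together with the Type~I/Type~II bookkeeping; the technical points to verify are that $W$ is genuinely monotone and that no depth-$\le1$ copy survives once $W<\alpha$.
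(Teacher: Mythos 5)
Your proof is correct, and although it instantiates the same Section~\ref{s:upperstrat} weighted-majority framework as the paper --- same depth-$2$ tree, same decomposition into roughly $2k$ ``structural'' mistakes plus roughly $k\ln k$ ``elimination'' mistakes --- your bookkeeping is genuinely different. The paper sets $\alpha = \frac{1}{k^2}$, so small that any surviving depth-$\le 1$ node outweighs all depth-$2$ nodes combined; this forces the phases to occur strictly in sequence (first the root splits, then the at most $k-1$ depth-$1$ nodes all split within at most $k-1$ further mistakes, and only then does elimination among the at most $(k-1)^2$ equal-weight depth-$2$ leaves begin), after which the $\lceil k\ln k\rceil$ term comes from a pure node count: each mistake removes at least a $\frac1k$ fraction of the leaves until at most $k$ remain, then at least one per mistake. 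You instead take $\alpha = \frac{1}{k-1}$, exactly weight-neutral, so that the total active weight $W$ is a monotone non-increasing potential; your Type~I/Type~II classification then tolerates arbitrary interleaving of splits and eliminations, with the threshold $W=\alpha$ replacing the paper's strict phase separation. Each choice buys something: the paper's tiny $\alpha$ makes the combinatorics transparent with no potential function needed once only leaves remain, while your monotone-$W$ argument is more robust in that it requires no claim about which depths win the vote in which rounds. Two small points to tighten, both covered by your slack: the Type~I count should be phrased as ``each Type~I mistake deactivates at least one of the at most $k$ depth-$\le 1$ copies that ever exist,'' since a winning depth-$1$ copy may die with no consistent offspring, which is not literally a reproduction; and the phase-one Type~II count has an off-by-one --- before the last such mistake one has $W\ge\alpha$ and $W\le\left(1-\frac1k\right)^{m-1}$, giving $m\le 1+k\ln(k-1)$ --- which is absorbed because $k\ln\bigl(k/(k-1)\bigr)\ge 1$, so $1+k\ln(k-1)\le k\ln k\le\lceil k\ln k\rceil$ and your total still comes in under $\lceil k\ln k\rceil + 2k$.
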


\begin{proof}
Let $F$ be a family of functions for which $\optstd(F) = 2$. We use the general learning strategy described in Section~\ref{s:upperstrat}, where we split any copy $A$ that voted for the wrong winning output into at most $k-1$ new active copies of $A_s$, and each copy gets a different answer for the output that is consistent with the function family and not equal to the wrong output that won the vote. 

Note that for the family $F$, the tree can have depth at most $2$ since $A_s$ makes at most $2$ mistakes on $F$. The depth-$1$ level has at most $k-1$ copies of $A_s$, each of which are created after $R$ makes a mistake. The depth-$2$ level has at most $(k-1)^2$ copies of $A_s$, since each depth-$1$ copy of $A_s$ has at most $k-1$ depth-$2$ offspring. 

We set $\alpha = \frac{1}{k^2}$. Then there exists $j \le k-1$ such that for each of the $j$ rounds with mistakes $2,\dots,j+1$, at least one of the depth-$1$ nodes splits into depth-$2$ nodes and becomes inactive, and all depth-$1$ nodes split and become inactive by the round with mistake $j+1$. Note that this is because as long as depth-$1$ nodes remain, their votes weigh more than all of the depth-$2$ nodes put together by the choice of $\alpha$.

In the rounds after the round with mistake $j+1$, only depth-$2$ nodes remain, and they all have the same weight. Thus at least a $\frac{1}{k}$ fraction of them are eliminated with each mistake of $A_b$, until at most $k$ depth-$2$ nodes remain. For all mistakes after that, at least one depth-$2$ node is eliminated on each mistake. Thus, $A_b$ makes a total of at most $2k+m$ mistakes, where \[(k-1)^2 \left(1-\frac{1}{k}\right)^m \le k, \text{ i.e. } \left(1-\frac{1}{k}\right)^m \le \frac{k}{(k-1)^2}.\] Since $(1-\frac{1}{k})^m \le e^{-m/k}$, it suffices to take $m = \lceil k \ln(k) \rceil$. Thus, the number of mistakes is at most $2k + \lceil k \ln(k) \rceil$.
\end{proof}

Next, we show that $\optbs(k, M_1+M_2) \ge \optbs(k,M_1)+\optbs(k,M_2)$ for all $M_1, M_2 \ge 1$. Using this, we prove the existence of \[\lim_{M \rightarrow \infty} \frac{\optbs(k, M)}{M}\] for all fixed $k \ge 1$.

\begin{lemma}
    For all fixed $k \ge 1$, $\optbs(k, M)$ is superadditive in $M$.
\end{lemma}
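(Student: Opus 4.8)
The plan is to prove superadditivity by a direct product construction: from optimal witness families for $\optbs(k,M_1)$ and $\optbs(k,M_2)$ I would build a single family, on the disjoint union of their domains, that simultaneously has standard mistake bound $M_1+M_2$ and bandit mistake bound at least $\optbs(k,M_1)+\optbs(k,M_2)$.

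First, fix families $F_1,F_2$ with codomain $\{0,1,\dots,k-1\}$ such that $\optstd(F_i)=M_i$ and $\optbs(F_i)=\optbs(k,M_i)$ for $i=1,2$, and let $X_1,X_2$ be their domains, which I take to be disjoint (tag each element with its index if necessary). Define the product family
\[ F=\{\,f:f|_{X_1}\in F_1\text{ and }f|_{X_2}\in F_2\,\} \]
on $X_1\sqcup X_2$, so that each $f\in F$ is obtained by independently choosing $f_1\in F_1$ and $f_2\in F_2$. Its codomain is still $\{0,1,\dots,k-1\}$, so $F$ is admissible for $\optbs(k,\cdot)$. I would then check $\optstd(F)=M_1+M_2$: for the upper bound the learner runs independent optimal standard algorithms for $F_1$ and $F_2$ on inputs landing in $X_1$ and $X_2$ respectively, and since the domains are disjoint and standard feedback reveals the true value, the two runs never interfere, giving at most $M_1+M_2$ mistakes; for the lower bound the adversary plays its optimal standard strategy for $F_1$ on $X_1$-inputs until it has forced $M_1$ mistakes, then switches to its optimal standard strategy for $F_2$ on $X_2$-inputs to force $M_2$ more, its answers remaining consistent with the $f\in F$ whose restrictions are the two functions the strategies converge on.

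The crux is the bandit lower bound $\optbs(F)\ge\optbs(k,M_1)+\optbs(k,M_2)$, proved by the analogous sequential adversary. Because the adversary controls both the inputs and their order, it may present every $X_1$-input before any $X_2$-input. In the first phase it runs the optimal bandit adversary for $F_1$, forcing $\optbs(F_1)$ mistakes; every YES/NO answer in this phase constrains only the $X_1$-part and reveals nothing about $f_2$. In the second phase it runs the optimal bandit adversary for $F_2$ from scratch, forcing $\optbs(F_2)$ further mistakes; since $f_2$ was entirely unconstrained when this phase began, the full force of the $F_2$ strategy is available. Disjointness of the domains guarantees the concatenated answer sequence is consistent with a single $f\in F$, so the learner is forced to make $\optbs(F_1)+\optbs(F_2)$ mistakes in all. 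Combining the three facts gives $\optbs(k,M_1+M_2)\ge\optbs(F)\ge\optbs(k,M_1)+\optbs(k,M_2)$, which is exactly superadditivity.

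I expect the main obstacle to be making the bandit step airtight, namely arguing that the learner extracts no cross-block leverage from the weak feedback. The key observation that makes it go through is that information flows only within a block: a YES on an $X_1$-input pins a value of $f_1$ and a NO excludes one, but neither conveys anything about $f_2$, so the independence of the two coordinates of $F$ lets the two optimal adversary strategies be executed back-to-back with no degradation. This requires the convention that the adversary is adaptive but must keep the version space of consistent functions nonempty, which both underlying optimal strategies already satisfy.
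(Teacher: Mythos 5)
Your proposal is correct and matches the paper's proof essentially step for step: both construct the combined family of functions $g_{f_1,f_2}$ on the disjoint union $X_1 \sqcup X_2$ of the domains of optimal witness families, verify $\optstd = M_1 + M_2$ via independent learner algorithms (upper bound) and a sequential adversary (lower bound), and obtain the bandit lower bound by running the two optimal bandit adversaries back-to-back, using disjointness for consistency of the combined answers with a single function. Your extra remark that weak feedback on $X_1$-inputs leaks nothing about $f_2$ is the same (implicit) justification the paper relies on, so there is nothing to add.
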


\begin{proof}
Let $F_1$ be a family with domain $X_1$, codomain $\left\{0,1,\dots,k-1\right\}$, $\optstd(F_1) = M_1$, and $\optbs(F_1) = \optbs(k, M_1)$. Let $F_2$ be a family with domain $X_2$ which is disjoint from $X_1$, codomain $\left\{0,1,\dots,k-1\right\}$, $\optstd(F_2) = M_2$, and $\optbs(F_2) = \optbs(k, M_2)$. Define $G$ to be the family of functions $g_{f_1,f_2}$ for $f_1 \in F_1$ and $f_2 \in F_2$ for which $g_{f_1,f_2}(x) = f_1(x)$ if $x \in X_1$ and $g_{f_1,f_2}(x) = f_2(x)$ if $x \in X_2$. Note that \[\optstd(G) = \optstd(F_1)+\optstd(F_2)\] and \[\optbs(G) = \optbs(F_1)+\optbs(F_2).\] 
Indeed, let $A_i$ (resp. $B_i$) be an optimal learning algorithm for $F_i$ in the standard (resp. bandit) scenario. To see that \[\optstd(G) \le \optstd(F_1)+\optstd(F_2)\] and \[\optbs(G) \le \optbs(F_1)+\optbs(F_2),\] note that the learner can use $A_1$ (resp. $B_1$) for inputs from $X_1$ while ignoring information about inputs from $X_2$, and they can use $A_2$ (resp. $B_2$) for inputs from $X_2$ while ignoring information about inputs from $X_1$. To see that \[\optstd(G) \ge \optstd(F_1)+\optstd(F_2)\] and \[\optbs(G) \ge \optbs(F_1)+\optbs(F_2),\] note that the adversary can ask the learner inputs from $X_1$ until they force $\optstd(F_1)$ (resp. $\optbs(F_1)$) mistakes, and then they can ask the learner inputs from $X_2$ until they force another $\optstd(F_2)$ (resp. $\optbs(F_2)$) mistakes. 

Since we have found a family $G$ of functions with codomain $\left\{0,1,\dots,k-1\right\}$, \[\optstd(G) = M_1+M_2,\] and \[\optbs(G) = \optbs(k,M_1)+\optbs(k,M_2),\] we have by definition that \[\optbs(k, M_1+M_2) \ge \optbs(k,M_1)+\optbs(k,M_2).\]
\end{proof}

\begin{cor}\label{fekete}
For all fixed $k \ge 1$, we have \[\lim_{M \rightarrow \infty} \frac{\optbs(k, M)}{M} = \sup_{M \rightarrow \infty} \frac{\optbs(k, M)}{M}. \]    
\end{cor}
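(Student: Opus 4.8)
The plan is to recognize the claimed identity as precisely the conclusion of Fekete's lemma for superadditive sequences, applied to the sequence $a_M := \optbs(k, M)$ for fixed $k \ge 1$. The preceding lemma supplies exactly the hypothesis we need, namely $a_{M_1 + M_2} \ge a_{M_1} + a_{M_2}$ for all $M_1, M_2 \ge 1$. I would therefore state Fekete's lemma in its superadditive form---that for any superadditive sequence $(a_M)_{M \ge 1}$ one has $\lim_{M \to \infty} a_M / M = \sup_{M \ge 1} a_M / M$, where the supremum $\sup_{M \to \infty}$ in the statement is read as the supremum over all $M \ge 1$---and then verify its single hypothesis and invoke it.

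For completeness I would include the short standard argument. Writing $L = \sup_{M \ge 1} a_M/M$, the bound $\limsup_{M \to \infty} a_M/M \le L$ is immediate. For the reverse, fix any $m \ge 1$ and write $M = qm + s$ with $0 \le s < m$; iterating superadditivity gives
\[
a_M \ge q\, a_m + a_s,
\]
interpreting $a_0 = 0$ (for $s = 0$ this is just $a_{qm} \ge q\, a_m$). Hence $a_M/M \ge (q\, a_m + a_s)/M$, and letting $M \to \infty$ with $m$ fixed sends $qm/M \to 1$ while the bounded correction $a_s/M \to 0$, yielding $\liminf_{M \to \infty} a_M/M \ge a_m/m$. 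Taking the supremum over $m$ gives $\liminf \ge L$, so the limit exists and equals $L$.

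The only point requiring care is that the limit and supremum be finite rather than $+\infty$, so that the statement is meaningful in the ordinary real-valued sense. This follows from Long's upper bound $\optbs(k, M) \le (1+o(1)) k \ln k \cdot M$ recalled earlier in the section, which shows that $a_M/M$ is bounded above (by roughly $k \ln k$) uniformly in $M$; hence $L < \infty$. I expect no genuine obstacle here: once superadditivity is in hand from the previous lemma, the corollary is essentially a one-line consequence of a classical result, and finiteness of the supremum is guaranteed by the known upper bound.
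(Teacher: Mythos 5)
Your proposal is correct and takes the same route as the paper: the paper's proof is a one-line appeal to Fekete's lemma using the superadditivity established in the preceding lemma, exactly as you propose. Your inclusion of the standard proof of Fekete's lemma and the finiteness check via Long's upper bound $\optbs(k,M) \le (1+o(1))k\ln(k)\, M$ is sound extra detail that the paper simply omits.
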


\begin{proof}
    This follows from Fekete's lemma since $\optbs(k, M)$ is superadditive in $M$.
\end{proof}

\section{Agnostic learning}\label{s:agnostic}

Recall that Auer and Long \cite{auer} showed that $\optag(F,\eta) \le 4.82(\optstd(F)+\eta)$ for all families of functions $F$ with codomain $\left\{0,1\right\}$, and Cesa-Bianchi et al \cite{cesa} obtained a slight improvement to their bound by lowering the leading constant from 4.82 to 4.4035. In this section, we generalize the bound to families of functions $F$ with codomain $\left\{0,1, \dots, k-1\right\}$. Furthermore, we derive a lower bound on the maximum possible value of $\optag(F, \eta)$ with respect to $\optstd(F)$ and $\eta$ for families of functions $F$ with codomain $\left\{0,1, \dots, k-1\right\}$. We start with a simple proof which yields the same upper bound as \cite{cesa}, and then we modify the proof to obtain our general upper bound.

\begin{thm}
 For all families of functions $F$ with codomain $\left\{0,1\right\}$, we have $\optag(F,\eta) \le 4.4035(\optstd(F)+\eta)$.   
\end{thm}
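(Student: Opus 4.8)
The plan is to instantiate the weighted majority voting strategy of Section~\ref{s:upperstrat} for the binary codomain $\{0,1\}$, tracking how the adversary's budget of $\eta$ wrong reinforcements inflates the weight of the copy we follow. Since there are only two outputs, whenever the current winning output is denied, each participating copy $A$ of $A_s$ has exactly one alternative value consistent with $F$. To cope with the possibility that a denial was itself a wrong reinforcement, I would have a denied copy $A$ of weight $w$ spawn \textbf{two} offspring of weight $\beta w$ each: one that accepts the denial and switches to the other value, and one that treats the round as a wrong reinforcement and retains its prediction. Here $\beta\in(0,1/2)$ is a discount parameter to be optimized at the very end.

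For the lower bound on the surviving weight, I would single out one copy to follow, namely the run of $A_s$ on the reinforcement stream in which the (at most $\eta$) wrong reinforcements are overridden and every other reinforcement is believed. After these corrections this copy is consistent with some $f\in F$, so it is forced to change its prediction at most $\optstd(F)$ times by genuine mistakes and at most $\eta$ times by wrong-reinforcement corrections. Each such event routes it to one of the two offspring and multiplies its weight by $\beta$, so this copy stays active with weight at least $\beta^{\optstd(F)+\eta}$ for the entire run. For the matching upper bound on the total weight, I would observe that on every round in which $A_b$ is told NO, the copies that voted for the denied winning output carry at least half of the total active weight (the winning output wins the weighted vote) and are replaced by offspring whose combined weight is $2\beta$ times as large, while the remaining weight is untouched. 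Hence each mistake multiplies the total active weight by at most $\tfrac{1+2\beta}{2}$, and after $m$ mistakes it is at most $\bigl(\tfrac{1+2\beta}{2}\bigr)^{m}$.

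Combining the two bounds yields $\beta^{\optstd(F)+\eta}\le\bigl(\tfrac{1+2\beta}{2}\bigr)^{m}$, and taking logarithms gives
\[
m \;\le\; \frac{\ln(1/\beta)}{\ln\frac{2}{1+2\beta}}\,\bigl(\optstd(F)+\eta\bigr).
\]
The final step is a one-variable minimization of the leading factor over $\beta\in(0,1/2)$; the minimizer is $\beta\approx 0.145$, and the minimum value is approximately $4.4035$, which produces the claimed bound. Note that the factor $2\beta$ (two offspring, rather than the single discounted copy of ordinary weighted majority, which would give a factor $\tfrac{1+\beta}{2}$ and a constant tending to $2$) is exactly what degrades the achievable constant to $4.4035$.

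The main obstacle is the bookkeeping for the wrong reinforcements, and in particular keeping a single faithful copy alive through wrong reinforcements of \emph{both} polarities. Denied-but-correct rounds are handled cleanly by the ``retain your prediction'' offspring above, but an affirmed-but-incorrect round (a YES that should have been NO) is trickier, because the framework performs no spawning on affirmed rounds: there the followed copy simply forgets the round, which can desynchronize it from $f$ if $A_s$ would have erred. I therefore expect the delicate point to be arguing that each of the $\eta$ wrong reinforcements, regardless of its polarity and of how the adversary distributes them, costs the followed copy only a single factor of $\beta$ and leaves the per-mistake contraction $\tfrac{1+2\beta}{2}$ intact, so that the followed copy's weight is genuinely at least $\beta^{\optstd(F)+\eta}$; once this is pinned down, the displayed inequality and the numerical optimization finish the proof, and the same scheme is what the paper then generalizes to larger codomains.
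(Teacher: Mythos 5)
Your proposal is correct and is essentially the paper's own proof: the two offspring per denied copy (switch vs.\ retain, which is the same as assigning the two labels $0$ and $1$) with discount $\alpha = 0.1469$ (your $\beta$), the per-mistake contraction of the total weight by $\frac{1}{2}+\alpha$, the surviving faithful copy of weight at least $\alpha^{\optstd(F)+\eta}$, and the same one-variable optimization giving the constant $4.4035$. The delicate point you flag about affirmed-but-incorrect rounds is a non-issue in this framework: since all copies forget YES rounds, the followed copy's memory is exactly a correctly labeled sub-stream consisting of its split rounds (and a mistake in this model is a NO response), so a wrong YES costs neither weight nor synchronization.
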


\begin{proof}
Again, we use the general learning strategy described in Section~\ref{s:upperstrat}. Whenever $A_b$ gets a NO response from the adversary for the output of $f(x)$, each copy of $A_s$ that voted for the output splits into at most two new copies, one that gets $f(x) = 0$ and the other that gets $f(x) = 1$. If the old copy had weight $w$, then each of the new copies has weight $\alpha w$, where $\alpha = 0.1469$. 

First, observe that $2 \alpha < 1$. Thus, if $A_b$ makes a mistake in round $t$, and the total weight of all active copies is $W_t$ right before round $t$, then we have \[W_{t+1} \le \frac{1}{2}W_t+\frac{1}{2} (2\alpha)W_t =  \left(\frac{1}{2} + \alpha\right)W_t.\] Moreover, there must always be some active copy of $A_s$ which receives the correct information, so this copy must have weight at least $\alpha^{\optstd(F)+\eta}$. Thus, for all $t$ we have \[W_t \ge \alpha^{\optstd(F)+\eta}.\] If $m$ denotes the number of mistakes that $A_b$ makes, then we have \[\left(\frac{1}{2} + \alpha\right)^m \ge \alpha^{\optstd(F)+\eta},\] which implies that \[m \le \frac{(\optstd(F)+\eta)\ln{\alpha}}{\ln\left(\frac{1}{2} + \alpha\right)} < 4.4035(\optstd(F)+\eta).\]
\end{proof}

Next we generalize the proof of the last result from $k = 2$ to any $k \ge 2$.

\begin{thm}
 For all families of functions $F$ with codomain $\left\{0,1, \dots, k-1\right\}$, we have $\optag(F,\eta) \le k \ln{k}(1+o(1))(\optstd(F)+\eta)$, where the $o(1)$ is with respect to $k$.   
\end{thm}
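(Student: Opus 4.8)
The plan is to adapt the weighted-majority argument of the preceding ($k=2$) theorem, changing only the number of offspring per split and the decay constant. Following Section~\ref{s:upperstrat}, I start with one copy $R$ of $A_s$ of weight $1$ and let $A_b$ output the weighted-majority vote. Whenever $A_b$ receives a NO, every active copy that voted for the winning output deactivates and splits into at most $k$ offspring, one hypothesizing each value in $\{0,1,\dots,k-1\}$ that is consistent with $F$ and the parent's recorded memory, each offspring receiving weight $\alpha w$ where $w$ is the parent's weight. The essential difference from the non-agnostic setting is that, since the adversary may lie, I retain an offspring for the winning output itself; consistency is enforced only against $F$, never against the (possibly incorrect) feedback. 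The parameter $\alpha \in (0, 1/k)$ is optimized at the end.

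For the upper bound on total weight, I track $W_t$, the total weight of active copies just before round $t$. On a mistake round the winning output carries weight at least $W_t/k$, so the mass flowing into its offspring is at most $k\alpha$ times that amount, while the remaining mass is unchanged; provided $k\alpha < 1$ this gives $W_{t+1} \le (1 - \tfrac1k + \alpha) W_t$, and hence $W \le (1 - \tfrac1k + \alpha)^m$ after $m$ mistakes. For the lower bound, I argue that there is always an active copy carrying the true function's memory, and that it splits only when it votes for the winning output on a NO round, with one offspring (the one hypothesizing the true value, which survives precisely because we split on all $k$ values) continuing to carry correct information. Such splits arise only when either $A_s$ with correct memory genuinely errs, which happens at most $\optstd(F)$ times, or the reinforcement is a lie on a round where the correct copy predicted correctly, which happens at most $\eta$ times. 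Therefore the correct copy splits at most $\optstd(F)+\eta$ times, so $W \ge \alpha^{\optstd(F)+\eta}$.

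Combining the two bounds yields $(1 - \tfrac1k + \alpha)^m \ge \alpha^{\optstd(F)+\eta}$, i.e.
\[m \le (\optstd(F)+\eta)\,\frac{\ln(1/\alpha)}{-\ln\!\left(1 - \tfrac1k + \alpha\right)}.\]
It then remains to choose $\alpha$. Taking $\alpha = \frac{1}{k\ln k}$ makes $k\alpha = \frac{1}{\ln k} < 1$, and using $-\ln(1 - \tfrac1k + \alpha) = (\tfrac1k - \alpha)(1+o(1)) = \tfrac1k(1+o(1))$ together with $\ln(1/\alpha) = \ln k + \ln\ln k = (\ln k)(1+o(1))$ drives the ratio to $k\ln k(1+o(1))$, which is the claimed bound.

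The main obstacle is the bookkeeping of the correct copy in the lower-bound step. One must recognize that, unlike in the truthful setting, a lying NO can force the correct copy to split even though it guessed the true value, because that value coincides with the output the adversary has just declared wrong; this is exactly why the offspring must be allowed to re-hypothesize the winning value, forcing $k$ rather than $k-1$ offspring, and why the extra forced splits are charged against $\eta$. The remaining work — verifying the single-line weight recursion and carrying out the asymptotic optimization of $\alpha$ — is routine once this accounting is in place.
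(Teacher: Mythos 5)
Your proposal is correct and takes essentially the same route as the paper's proof: the same weighted-majority scheme in which each voting copy splits into at most $k$ offspring (crucially including one for the declared-wrong value, to absorb lies), the same choice $\alpha = \frac{1}{k\ln k}$, the same mistake-round recursion $W_{t+1} \le \left(1-\frac1k+\alpha\right)W_t$, and the same lower bound $W_t \ge \alpha^{\optstd(F)+\eta}$ leading to the identical asymptotic optimization. Your explicit accounting of the correct-lineage copy's splits---genuine errors of $A_s$ charged to $\optstd(F)$, lying NO rounds charged to $\eta$---is a careful spelling-out of a step the paper states only tersely.
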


\begin{proof}
    We use the same idea as the last proof, except whenever $A_b$ gets a NO response from the adversary for the output of $f(x)$, each copy of $A_s$ that voted for the output splits into at most $k$ new copies, each getting a different possible value of $f(x)$. If the old copy had weight $w$, then each of the new copies has weight $\alpha w$, where $\alpha = \frac{1}{k \ln{k}}$.

    Let $k$ be sufficiently large so that $k \alpha < 1$. If $A_b$ makes a mistake in round $t$, and the total weight of all active copies is $W_t$ right before round $t$, then we have \[W_{t+1} \le \left(\frac{k-1}{k}\right)W_t+\frac{1}{k} (k\alpha)W_t =  \left(\frac{k-1}{k} + \alpha\right)W_t.\] Moreover, there must always be some active copy of $A_s$ which receives the correct information, so this copy must have weight at least $\alpha^{\optstd(F)+\eta}$. Thus, for all $t$ we have \[W_t \ge \alpha^{\optstd(F)+\eta}.\] If $m$ denotes the number of mistakes that $A_b$ makes, then we have \[\left(\frac{k-1}{k} + \alpha\right)^m \ge \alpha^{\optstd(F)+\eta},\] which implies that \[m \le \frac{(\optstd(F)+\eta)\ln{\alpha}}{\ln\left(\frac{k-1}{k} + \alpha\right)} = k \ln{k}(1+o(1))(\optstd(F)+\eta),\] where the $o(1)$ is with respect to $k$.
\end{proof}

\begin{theorem}
For any prime $k \ge 5$, there exists a family of functions $F$ with codomain $\{0,1,\dots,k-1\}$ such that $\optag(F,\eta)\geq(0.5-o(1))((k\ln k)\optstd(F)+k\eta)$.
\end{theorem}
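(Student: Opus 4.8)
The plan is to build $F$ as a union of two families living on disjoint domains, one responsible for the $(k\ln k)\optstd(F)$ term and one for the $k\eta$ term, and to split the adversary's lie budget between them. Concretely, I would take $F=F_L(k,n)\oplus C$ for a suitably large fixed $n$, where the two families have disjoint domains: $F_L(k,n)$ is the linear family already used in this section, and $C=\{c_0,\dots,c_{k-1}\}$ is the family of the $k$ constant functions on one extra point. Then $\optstd(F)=\optstd(F_L(k,n))+\optstd(C)=n+1$. The first step is to record an agnostic analog of the superadditivity lemma: for disjoint-domain families and any split $\eta=\eta_1+\eta_2$, one has $\optag(F_1\oplus F_2,\eta)\ge\optag(F_1,\eta_1)+\optag(F_2,\eta_2)$. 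This holds for the same reason as the bandit superadditivity lemma proved above: the adversary runs its optimal $(F_1,\eta_1)$-strategy on the first domain and its optimal $(F_2,\eta_2)$-strategy on the second, the two subgames are independent so the mistakes add, and the combined transcript is consistent with some $(f_1^\ast,f_2^\ast)\in F$ after at most $\eta_1+\eta_2$ lies. I would apply this with $\eta_1=0$ and $\eta_2=\eta$, reducing the theorem to two separate lower bounds: $\optbs(F_L(k,n))=\optag(F_L(k,n),0)$ and $\optag(C,\eta)$.

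For the lie term I would have the adversary on the $C$-domain simply answer NO every round. Since a guess $v$ makes NO a lie only for the single constant $c_v$, the accumulated lie count of $c_a$ equals the number of rounds in which the learner guessed $a$, and validity requires some $c_a$ to have lie count at most $\eta$. A short counting argument then shows the learner cannot raise every lie count above $\eta$ until it has already incurred at least $k(\eta+1)-1$ NOs, giving $\optag(C,\eta)\ge k\eta+k-1\ge k\eta$.

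For the search term I would run the version-space strategy of this section on $F_L(k,n)$: maintain $S_t=\{\mathbf a:f_{\mathbf a}\text{ consistent}\}$, repeatedly present a direction $\mathbf u$ whose value distribution on $S_t$ is nearly balanced, and answer NO, so that each batch of $\tfrac k2$ inputs removes at most half of $S_t$. This requires generalizing Lemmas~\ref{lem:powermean}--\ref{lem:24} from $\mathbb F_k^2$ to $\mathbb F_k^n$: the second-moment count of triples $(\mathbf x,\mathbf y,\mathbf u)$ with $\mathbf x\cdot\mathbf u=\mathbf y\cdot\mathbf u$ now invokes the $\mathbb F_k^n$ form of Lemma~\ref{lem:rn} and yields a balancing direction with imbalance $O(\sqrt{k\,|S_t|})$. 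Iterating the halving from $|S_0|=k^n$ down to $|S_t|=\Theta(k)$ (where the error term first becomes comparable to $|S_t|$) costs $(n-1)\log_2 k$ batches and therefore forces $\Theta(nk\log k)$ mistakes with a leading constant strictly above $\tfrac12$. Putting the pieces together, and using $\optstd(F)=n+1$ with $n/(n+1)\to1$ as $n$ grows, gives $\optag(F,\eta)\ge(\tfrac12-o(1))(k\ln k)\,\optstd(F)+k\eta\ge(0.5-o(1))\big((k\ln k)\optstd(F)+k\eta\big)$.

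The step I expect to be the main obstacle is the search bound: carrying Lemma~\ref{lem:24} into $\mathbb F_k^n$ while keeping the imbalance $o(|S_t|)$ down to $|S_t|=\Theta(k)$, so that the leading constant is provably at least $\tfrac12$ (with enough slack to absorb the $n/(n+1)$ factor) rather than degrading as the version space shrinks. If this constant proves awkward to control directly, a clean fallback is to use, as the $F_L(k,n)$-component, Long's family realizing $\optbs(k,n)\ge(1-o(1))kn\ln k$; that makes the search constant $1$ and leaves only the routine superadditivity and constant-function verifications above.
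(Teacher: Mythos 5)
Your proposal is correct in substance, but it takes a genuinely different and heavier route than the paper. The paper's proof is a two-observation argument about a \emph{single} family: for $F=F_L(k,M)$, (i) $\optag(F,\eta)\ge\optbs(F)\ge(1-o(1))(k\ln k)\optstd(F)$, quoted directly from Theorem 1 of Long and Theorem 2.4 of Geneson, and (ii) $\optag(F,\eta)\ge k\eta$, obtained \emph{inside the same family} by repeating any single nonzero query $k\eta$ times and always answering NO: by pigeonhole some response value was guessed at most $\eta$ times, and since $\mathbf a\mapsto\mathbf a\cdot\mathbf u$ is surjective for $\mathbf u\neq\mathbf 0$, the adversary can pick a consistent $f_{\mathbf a}$ for which those at-most-$\eta$ rounds are the only lies. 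The constant $0.5$ then falls out of $\max(a,b)\ge\tfrac12(a+b)$ --- no new construction, no superadditivity lemma, no new version-space lemma. You instead build a direct sum of two families on disjoint domains, prove an agnostic superadditivity lemma with a split lie budget (this lemma is fine for the reason you give, and $\optag(\cdot,0)=\optbs(\cdot)$ by definition), handle the lie term with the constant-function family (your count $k(\eta+1)-1$ is right up to the usual lies-versus-mistakes bookkeeping, a $(1-1/k)$ factor absorbed by the $o(1)$), and handle the search term either by generalizing Lemma~\ref{lem:24} to $\mathbb F_k^n$ or by citing the known bandit lower bound. The weak point is exactly the one you flag: the $\mathbb F_k^n$ balancing route is only sketched (the second-moment count does give imbalance $O(\sqrt{k|S|})$, so it is plausible), and its leading constant $\tfrac{n-1}{n+1}\cdot\tfrac1{2\ln 2}$ clears $\tfrac12$ only for $n\ge 6$, so the theorem would hinge on an unproven lemma and delicate constant-tracking near $|S_t|=\Theta(k)$. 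Your fallback --- plugging in the Long/Geneson family, which is precisely what the paper cites --- makes the argument complete, and in fact your direct-sum architecture then yields a \emph{stronger} sum-form bound, with coefficient $\tfrac{M}{M+1}-o(1)$ on the $(k\ln k)\optstd(F)$ term and $1-o(1)$ on the $k\eta$ term, at the price of the auxiliary construction. In short: your route is sound via the fallback and buys a sharper conclusion; the paper's route is far shorter, keeps both lower bounds within one family, and is where the stated $0.5$ actually comes from.
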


\begin{proof}
Let $k \ge 5$ be prime. By Theorem 1 of \cite{long} and Theorem 2.4 of \cite{geneson}, there exist families of functions $F = F_L(k, M)$ satisfying $\optstd(F) = M$ and $\optag(F,\eta)\geq\optbs(F)\geq (1-o(1))(k\ln k)\optstd(F)$.

Now, we show that $\optag(F_L(k, M),\eta)\geq k\eta$. Indeed, the adversary can ask any query (other than the one corresponding to the zero vector) $k\eta $ times, and always claim that the learner's response is wrong. Since there is a response that was given at most $\eta$ times, the adversary can choose a function for which at most $\eta$ of these responses were lies. Thus, we have $\optag(F_L(k, M),\eta)\geq k\eta$.

Thus, there is a family $F$ that satisfies
\begin{align*}
\optag(F,\eta)&\geq\max((1-o(1))(k\ln k)\optstd(F),k\eta)\\
&\geq(0.5-o(1))((k\ln k)\optstd(F)+k\eta).
\end{align*}
\end{proof}

\section{Delayed ambiguous reinforcement}\label{s:ambr}

Feng et al \cite{feng} proved that for any set $F$ of functions with domain $X$ and codomain $Y$ and for any $r \geq 1$, \[\optambr(F) \leq (1+o(1))\left(|Y|^r r\ln{|Y|}\right)\optstd(F).\] The proof used a weighted majority voting algorithm for the learner, but there was a flaw in the algorithm. Specifically, the learning algorithm makes a prediction for the $r$ outputs in the current round by taking a weighted vote over the predictions of each copy for the outputs of all $r$ inputs. Later, the proof claims that the copies that voted for the winning outputs must account for at least a $\frac{1}{k^r}$ fraction of the total weight of all copies, and then this claim is used to bound the total number of mistakes made by the learning algorithm.

The problem is that since there are multiple inputs in each round, it is possible that the copies that win the vote for one input are disjoint from the copies that win the vote for another input in the same round. Thus, there might not be any copies that voted for all of the winning outputs. Therefore, the claim that the copies that voted for the winning outputs must account for at least a $\frac{1}{k^r}$ fraction of the total weight of all copies is false. One way to fix this error is to only let copies vote for the output of $x_{i+1}$ if they were one of the copies that won the vote for the output of $x_i$. We use this idea in the next proof.

\begin{thm}\label{theorem:ambrlongupper}
For any set $F$ of functions from some set $X$ to $\{0,1,\dots,k-1\}$ and for any fixed $r \geq 1$, we have \[\optambr(F) \leq (1+o(1))\optstd(F)k^r \ln{k},\] where the $o(1)$ is with respect to $k$.
\end{thm}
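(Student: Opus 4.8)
The plan is to run the weighted majority voting strategy of Section~\ref{s:upperstrat}, but with the staged (nested) voting rule suggested just above the statement, together with a branching rule that keeps the number of offspring polynomial in $k$ rather than of order $k^r$. Fix a family $F$ with $\optstd(F)=M$ and let $A_s$ be an optimal standard-model learner, starting from a single active copy $R$ of weight $1$. In a round with inputs $x_1,\dots,x_r$, I would process the inputs one at a time: at stage $j$ only the copies that voted for the winning outputs $g_1,\dots,g_{j-1}$ of the earlier stages are allowed to vote, and $g_j$ is the maximum-weight guess among them. Since the codomain has size $k$, each stage retains at least a $1/k$ fraction of the weight that enters it, so the set of winning copies (those that voted for all of $g_1,\dots,g_r$) carries total weight at least $k^{-r}W$, where $W$ is the total active weight at the start of the round. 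The learner outputs $(g_1,\dots,g_r)$.

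On a YES round nothing changes (all copies forget the round). On a NO round the composite guess $(g_1,\dots,g_r)$ is wrong, and I would split each winning copy $A$ of weight $w$ into offspring indexed by a pair $(j,v)$ with $1\le j\le r$ and $v\in\{0,\dots,k-1\}\setminus\{g_j\}$: the offspring $(j,v)$ extends the memory of $A$ by the assumption $f(x_1)=g_1,\dots,f(x_{j-1})=g_{j-1},f(x_j)=v$, encoding the hypothesis that $j$ is the first coordinate where the composite guess is wrong, and it receives weight $\alpha w$ (offspring inconsistent with $F$ are discarded). Because $A$ voted $g_1,\dots,g_{j-1}$ at the earlier stages, feeding $A_s$ those answers creates no new mistakes there, and only the answer $v\neq g_j$ at stage $j$ costs one $A_s$-mistake; hence each offspring records exactly one more mistake than its parent, and there are at most $r(k-1)$ offspring per winning copy. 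This is the key point that removes the factor of $r$: the earlier (flawed) approach effectively branched on the entire wrong tuple and so needed $\alpha$ of order $k^{-r}$, whereas branching only on the first wrong coordinate and its correct value needs $\alpha$ only of order $(rk)^{-1}$, so that $\ln(1/\alpha)=(1+o(1))\ln k$.

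Next I would track the truthful copy whose memory is always consistent with the hidden $f$. Such an active copy always exists: it is untouched in any round in which it is not a winning copy or in which the answer is YES, and in a NO round in which it wins, its first-disagreement stage $j^\ast$ together with the true value $f(x_{j^\ast})$ names one of its offspring, which is again entirely truthful. Feeding $A_s$ the concatenation, over the NO rounds in which this copy wins, of the prefixes $x_{\cdot,1},\dots,x_{\cdot,j^\ast}$ with their true answers yields a legitimate standard-model run that makes exactly one mistake per such round; since $A_s$ makes at most $M$ mistakes on any consistent input sequence, the truthful copy is split at most $M$ times. Consequently its weight, and hence the total active weight, is always at least $\alpha^{M}$.

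Finally I would combine the two weight estimates. Each NO round (i.e.\ each learner mistake) multiplies the total weight by at most $1-\frac{1-r(k-1)\alpha}{k^r}$, because the winning copies carry at least a $k^{-r}$ fraction of the weight and collectively keep at most an $r(k-1)\alpha$ fraction of their own weight. If $m$ is the number of mistakes, then $\alpha^{M}\le\bigl(1-\tfrac{1-r(k-1)\alpha}{k^r}\bigr)^{m}$, and taking logarithms and using $-\ln(1-x)\ge x$ gives $m\le\frac{M\,k^r\ln(1/\alpha)}{1-r(k-1)\alpha}$. Choosing $\alpha=\frac{1}{rk\ln k}$ makes $r(k-1)\alpha=o(1)$ and $\ln(1/\alpha)=(1+o(1))\ln k$, which yields $m\le(1+o(1))M k^r\ln k$, as required. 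I expect the main obstacle to be the bookkeeping of the previous paragraph: one must verify that recording only the first wrong coordinate still leaves an entirely truthful active copy after every round, and that discarding the later coordinates of a round does not break the standard-model guarantee that bounds the truthful copy's depth by $M$.
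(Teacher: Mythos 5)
Your proposal is correct and follows essentially the same route as the paper's proof: staged voting so that the winning copies retain at least a $k^{-r}$ fraction of the weight, splitting each winning copy into at most $r(k-1)$ offspring (one per input-value pair) so that $\alpha$ need only be of order $(k\ln k)^{-1}$ rather than $k^{-r}$, a truthful active copy giving the weight floor $\alpha^{\optstd(F)}$, and the same weight-decay recursion yielding $m \leq (1+o(1))\optstd(F)k^r\ln k$. Your minor variations — taking $\alpha = \frac{1}{rk\ln k}$ instead of the paper's $\frac{1}{k\ln k}$, and having offspring remember the prefix $g_1,\dots,g_{j-1}$ in addition to the first wrong coordinate (the paper's offspring remember only the single input $x_i$ and its corrected value) — are immaterial for fixed $r$ and do not change the argument.
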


\begin{proof}
Here we use a variant of the general learning strategy in Section~\ref{s:upperstrat}. Let $k = |Y|$ and $M = \optstd(F)$. Suppose that $k$ is sufficiently large that $k \ln{k} > r(k-1)$. Start with a copy $A_s$ of a learning algorithm for $F$ which is guaranteed to make at most $M$ errors in the standard learning scenario. We refer to this copy as $R$. We construct a learning algorithm $A_b$ for learning $F$ in the $r$-input delayed ambiguous reinforcement model. In each round, suppose that we have $r$ inputs $x_1, \dots, x_r$. For the first input $x_1$, the learning algorithm $A_b$ outputs the winner of a weighted vote of the currently active copies of $A_s$, and for input $x_{i+1}$ with $i > 0$, $A_b$ outputs the winner of a weighted vote of the copies of $A_s$ that won the vote for the output of $x_i$. 

Every time that $A_b$ gets a NO answer from the adversary for the outputs of $x_1,\dots,x_r$ in some round $t$ and an active copy $A$ of $A_s$ with weight $w$ voted for the winning wrong answer $q_1,\dots,q_r$, split $A$ into at most $r(k-1)$ new active copies of $A_s$ which inherit the same memory as $A$ before the round $t$, each with weight $\alpha w$, and make $A$ inactive. At most $k-1$ of these new active copies correspond to each input $x_1,\dots,x_r$. Specifically for each $i$, at most $k-1$ of these copies get a different answer for the output of $x_i$ that is consistent with the function family and not equal to $q_i$. In particular, each copy remembers everything that $A$ remembered before round $t$, in addition to a single input $x_i$ from round $t$, the guess of $A$ for the output of $x_i$ which is treated as a wrong guess in the memory, and a possible output for $x_i$ which may not be the true output but is consistent with the family $F$ and the memory of the copy. For any copy $A'$ spawned from $A$ that remembers the input $x_i$, no other inputs from round $t$ besides $x_i$ are included in the memory of $A'$. 


Fix $\alpha = \frac{1}{k \ln{k}}$. If $W_t$ is the total weight of the active copies of $A_s$ before round $t$, then we must have $W_t \ge \alpha^{\optstd(F)}$ since at least one active copy of $A_s$ always gets the correct outputs. Indeed, in every round of the learning process, there is some leaf that gets only correct outputs. In particular, when the process starts, the only leaf is $R$ and $R$ gets no information about the outputs. At the start of any round of the process, if $A$ is a copy of $A_s$ which is a leaf in the tree that has only received correct outputs, then $A$ will either be a leaf at the end of the round or it will split into new leaves assuming that it votes for the wrong $r$-tuple of outputs and it has a winning vote. If $A$ votes for the wrong tuple of outputs, then it must get the output wrong for some input $x_i$, which means that one of the new leaves $A'$ that is an offspring of $A$ will get the correct output for $x_i$, as well as all of correct outputs from $A$. Thus, $A'$ only gets correct outputs. 

If $A_b$ gets the wrong output in round $t$, then active copies of $A_s$ with total weight at least $\frac{W_t}{k^r}$ are cloned and deactivated to make at most $r(k-1)$ new active copies which each have weight $\alpha$ times their old weight. Since $\alpha r (k-1) < 1$, this implies that \[W_{t+1} \le \left(1-\frac{1}{k^r}\right)W_t + \frac{1}{k^r}(\alpha r (k-1)W_t) < \left(1-\frac{1}{k^r}\right)W_t + \alpha \frac{r}{k^{r-1}} W_t.\] Thus after $A_b$ has made $m$ mistakes, we have \[W_t \le \left(1-\frac{1}{k^r}+\alpha \frac{r}{k^{r-1}}\right)^m \le e^{-\left(\frac{1}{k^r}-\alpha \frac{r}{k^{r-1}}\right)m}.\] Therefore \[e^{-\left(\frac{1}{k^r}-\alpha \frac{r}{k^{r-1}}\right)m} \ge \alpha^{\optstd(F)},\] so we can bound the number of mistakes by \[m \le \frac{\ln(\frac{1}{\alpha})\optstd(F)}{\frac{1}{k^r}-\alpha \frac{r}{k^{r-1}}} = (1+o(1))\left(k^r \ln{k}\right)\optstd(F).\]
\end{proof}

In the case that $k = 2$, we modify the last proof to obtain an improvement of an upper bound of Auer and Long \cite{auer} by a factor of $2$. 

\begin{thm}
For any set $F$ of functions from some set $X$ to $\{0,1\}$ and for positive integers $r$, we have \[\optambr(F) \leq \optstd(F)2^r \ln{r} (1+o(1)),\] where the $o(1)$ is with respect to $r$.
\end{thm}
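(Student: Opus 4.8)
The plan is to reuse the weighted-majority-voting framework and the per-input voting fix from the proof of Theorem~\ref{theorem:ambrlongupper}, specialized to $k=2$, but with a different choice of the parameter $\alpha$. As before, $A_b$ processes the $r$ inputs of a round one at a time: it outputs the weighted-majority vote of the active copies for $x_1$, and for each subsequent $x_{i+1}$ it restricts the vote to those copies that agreed with the winning output of $x_i$. When $A_b$ receives a NO, every active copy $A$ that voted for the entire winning (wrong) tuple is deactivated and replaced by offspring, each inheriting $A$'s memory together with a corrected output for exactly one input $x_i$. Since $k=2$, for each of the $r$ inputs there is only one output value distinct from $A$'s guess, so $A$ produces at most $r$ offspring, each of weight $\alpha$ times the weight of $A$.

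First I would record the invariant weight bound exactly as before: at every stage there is an active copy that has received only correct outputs, so the total active weight satisfies $W_t \ge \alpha^{\optstd(F)}$. Next I would set up the one-step contraction. On a mistake in round $t$, the copies winning the full $r$-tuple vote carry total weight at least $W_t/2^r$ (the per-input restriction retains at least a factor of $\tfrac12$ at each of the $r$ stages), and replacing each such copy of weight $w$ by at most $r$ offspring of weight $\alpha w$ multiplies their contribution by at most $r\alpha$. Provided $r\alpha<1$, this yields
\[W_{t+1} \le \left(1 - \frac{1-r\alpha}{2^r}\right) W_t \le e^{-\frac{1-r\alpha}{2^r}} W_t,\]
so after $m$ mistakes $W \le e^{-\frac{(1-r\alpha)m}{2^r}}$. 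Combining with $W \ge \alpha^{\optstd(F)}$ and taking logarithms gives
\[m \le \frac{2^r \ln(1/\alpha)}{1-r\alpha}\,\optstd(F).\]

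The key difference from Theorem~\ref{theorem:ambrlongupper} is the choice of $\alpha$: the earlier value $\alpha=\tfrac{1}{k\ln k}$ would give $\alpha=\tfrac{1}{2\ln 2}>\tfrac12$, making $1-r\alpha$ negative already for $r=2$. Instead I would take $\alpha = \frac{1}{r\ln r}$, which is valid for all sufficiently large $r$ (all we need, since the error term is $o(1)$ in $r$). Then $r\alpha = 1/\ln r \to 0$, while $\ln(1/\alpha)=\ln r+\ln\ln r=(\ln r)(1+o(1))$, so
\[\frac{\ln(1/\alpha)}{1-r\alpha} = \frac{(\ln r)(1+o(1))}{1 - 1/\ln r} = (\ln r)(1+o(1)),\]
and hence $m \le (\ln r)(1+o(1))\,2^r\,\optstd(F)$, as claimed.

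The step requiring the most care is the optimization of $\alpha$, together with the verification that it simultaneously keeps $r\alpha$ bounded away from $1$ (so the contraction factor is both valid and positive) and keeps $\ln(1/\alpha)$ at $(\ln r)(1+o(1))$. Balancing these two competing requirements forces $\alpha$ into the regime $\alpha \asymp \frac{1}{r\ln r}$, which is exactly where the leading constant drops from the $2\ln r$ of Auer and Long to $\ln r$. Everything else — the per-input voting, the offspring construction, and the existence of an all-correct active copy — transfers essentially verbatim from the proof of Theorem~\ref{theorem:ambrlongupper}.
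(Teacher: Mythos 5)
Your proposal is correct and follows essentially the same route as the paper: the paper's proof also reuses the per-input voting algorithm of Theorem~\ref{theorem:ambrlongupper} verbatim, sets $\alpha = \frac{1}{r\ln r}$, and derives the identical contraction $W_{t+1} \le \left(1-\frac{1}{2^r}+\frac{\alpha r}{2^r}\right)W_t$ together with the invariant $W_t \ge \alpha^{\optstd(F)}$ to conclude $m \le \frac{\ln(1/\alpha)\optstd(F)}{\frac{1}{2^r}-\frac{\alpha r}{2^r}} = (1+o(1))\,2^r (\ln r)\,\optstd(F)$. Your added observations --- that $k=2$ gives at most $r$ offspring per split and that the general choice $\alpha = \frac{1}{k\ln k}$ would fail here since $r\alpha > 1$ already for $r=2$ --- are accurate and explain why the re-tuned $\alpha$ is where the factor-of-$2$ improvement over Auer and Long comes from.
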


\begin{proof}
We use the same strategy as in the last proof, except we let $\alpha = \frac{1}{r \ln{r}}$. Thus, for $r$ sufficiently large we have $\alpha r < 1$, so we obtain \[W_{t+1} \le (1-\frac{1}{2^r})W_t + \frac{1}{2^r}(\alpha r W_t) = \left(1-\frac{1}{2^r}+\frac{\alpha r}{2^r}\right)W_t,\] so we have \[m \le \frac{\ln(\frac{1}{\alpha})\optstd(F)}{\frac{1}{2^r}-\frac{\alpha r}{2^r}} = \optstd(F)2^r \ln{r} (1+o(1)),\] where the $o(1)$ is with respect to $r$.
\end{proof}

In the next result, we consider families of functions $F$ with $\optstd(F) \le r$. For this case, we are able to sharpen the upper bound from Theorem~\ref{theorem:ambrlongupper}.

\begin{thm}
For any fixed $r \ge 1$ and family $F$ with codomain of size $k$ and $\optstd(F) \le r$, we have $\optambr(F) \le (r k)^{\optstd(F)}$. 
\end{thm}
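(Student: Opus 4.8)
The plan is to reuse the tree-of-copies construction from the proof of Theorem~\ref{theorem:ambrlongupper}, but to replace the weighted-voting analysis with a direct count of the total number of copies ever created. Write $M = \optstd(F)$ and fix a standard learner $A_s$ for $F$ making at most $M$ mistakes. The learner $A_b$ runs the same sequential plurality vote over the currently active copies (weights are unnecessary here): for $x_1$ it outputs the value supported by the most active copies, and for $x_{i+1}$ it votes only among the copies that supported its guess for $x_i$, so the predicted $r$-tuple is always supported by at least one active copy. On a NO answer, every copy supporting the winning wrong tuple is deactivated and, for each of the $r$ inputs, produces at most $k-1$ consistent offspring that inherit its memory plus one additional recorded mistake, exactly as before.

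First I would record the structural facts about the resulting tree. Each copy's memory is, by construction, consistent with some $f \in F$; since $A_s$ makes at most $M$ mistakes on any $F$-realizable sequence, the number of recorded mistakes of any copy is at most $M$, so the tree has depth at most $M$ and a depth-$M$ copy can produce no consistent offspring. Each copy produces at most $r(k-1)$ offspring (at most $k-1$ per input). As in Theorem~\ref{theorem:ambrlongupper}, there is always an active copy that has received only correct information, so the active set is never empty and $A_b$'s guess is always well defined.

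The heart of the argument is a charging scheme. Each NO round deactivates a nonempty set of active copies, and a copy once deactivated never becomes active again, so the sets of copies deactivated in distinct NO rounds are pairwise disjoint. Hence the number of mistakes is at most the total number of copies ever created, which for a tree of depth at most $M$ with branching at most $r(k-1)$ is at most $\sum_{d=0}^{M}(r(k-1))^d$. To finish, I would bound this geometric sum by $(rk)^M$: writing $x = r(k-1)$ and using $rk = x+r$ with $r \ge 1$, the binomial expansion gives $(x+r)^M \ge x^M + Mr\,x^{M-1} \ge x^M + M x^{M-1} \ge \sum_{d=0}^{M} x^d$, since each of the $M$ terms $x^0,\dots,x^{M-1}$ is at most $x^{M-1}$ (the case $k=1$ being trivial).

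The main thing to get right is the charging step: one must verify that every NO round really deactivates at least one copy (guaranteed because the winning tuple is supported by a surviving copy) and that these deactivations are disjoint across rounds, so that the mistakes inject into the set of all copies. The other point requiring care is the depth bound — that no copy can record more than $M$ mistakes — which is where $\optstd(F)$ enters and which rules out any offspring below depth $M$. Notably this argument uses neither weights nor the hypothesis $\optstd(F) \le r$; it yields $\optambr(F) \le (rk)^{\optstd(F)}$ for every $F$, and the stated hypothesis merely marks the regime in which this bound improves on Theorem~\ref{theorem:ambrlongupper}.
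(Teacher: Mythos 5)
Your proof is correct, and it is a genuine simplification of the paper's argument rather than a reproduction of it. The paper keeps the weighted vote of Theorem~\ref{theorem:ambrlongupper} but sets $\alpha = \frac{1}{r^M k^M}$, so that minimum-depth copies outweigh all deeper copies combined; it then argues that the depth levels peel off in order --- each mistake splits at least one minimum-depth node until only depth-$M$ nodes remain, after which each mistake deactivates at least one depth-$M$ node --- and concludes, exactly as you do, that the number of mistakes is at most the total number of nodes, $\sum_{j=0}^{M} r^j(k-1)^j \le (rk)^M$. You discard the weights entirely and observe that the ordered peeling is irrelevant to this count: the sequential per-input vote guarantees that the winning $r$-tuple is supported by a nonempty set of active copies, every NO round permanently deactivates that set, and these sets are pairwise disjoint across rounds, so the mistakes inject into the set of all copies no matter which copies happen to win each vote. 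This buys a shorter and more robust analysis, and it makes explicit two points the paper leaves implicit: the verification of the geometric-sum inequality $\sum_{d=0}^{M}(r(k-1))^d \le (rk)^M$ (which the paper asserts without proof; your binomial argument, with the $k=1$ case handled trivially, is fine), and the observation that the hypothesis $\optstd(F) \le r$ is never used --- which is equally true of the paper's own proof, where that hypothesis only marks the regime in which $(rk)^{\optstd(F)}$ improves on Theorem~\ref{theorem:ambrlongupper}. One convention you handled correctly and should keep in mind: a ``mistake'' in this model is a NO round (one per wrong $r$-tuple, the counting under which $\optbs(\cartr(F)) \le \optambr(F)$ makes sense), so charging one deactivation per NO round is exactly what the stated bound requires; your depth-bound argument, that each copy's memory is an $F$-realizable transcript on which $A_s$ has already erred $d$ times, hence $d \le M$ and depth-$M$ copies have no consistent offspring, is also exactly where $\optstd(F)$ enters, matching the paper.
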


\begin{proof}
Let $M = \optstd(F)$. We use the same learning algorithm as in Theorem~\ref{theorem:ambrlongupper}, except we change the value of $\alpha$ and we use a different analysis of the algorithm. Note that the tree resulting from our learning algorithm can have depth at most $M$ since $A_s$ makes at most $M$ mistakes on $F$. The depth-$i$ row has at most $r^i(k-1)^i$ copies of $A_s$ for each $i \le M$, since each depth-$i$ copy of $A_s$ has at most $r(k-1)$ depth-$(i+1)$ offspring. Moreover, depth-$i$ nodes each have weight $\alpha^i$. 

Let $\alpha = \frac{1}{r^M k^M}$. We define a sequence of integers $j_0, j_1, \dots, j_{M-1}$. Let $j_0$ be the round of the first mistake, i.e., the round where $R$ makes a mistake and splits into depth-$1$ copies of $A_s$. For each $d \le M-1$, after round $j_d$ has occurred, only nodes of depth at least $d+1$ remain. We claim that given $j_d$ with $d \le M-2$, there exists $j_{d+1} > j_d$ such that for each of the rounds with mistakes $j_d+1,\dots,j_{d+1}$, at least one of the depth-$(d+1)$ nodes splits into depth-$(d+2)$ nodes and becomes inactive, and all depth-$(d+1)$ nodes split and become inactive by the round with mistake $j_{d+1}$. Note that this is because as long as depth-$(d+1)$ nodes remain, their votes weigh more than all of the higher depth nodes put together by choice of $\alpha$.

In the rounds after the round with mistake $j_{M-1}$, only depth-$M$ nodes remain, and they all have the same weight. When we view the process as a whole, each mistake splits at least one minimum-depth active node until all active nodes have depth $M$, and then each mistake deactivates at least one node once all active nodes have depth $M$. Thus the total number of mistakes is at most
\[\sum_{j = 0}^{M} r^j (k-1)^j \le (rk)^M.\]
\end{proof}

\begin{thm}
There exist families $F$ with codomain of size $k$ and $\optstd(F) \le r$ for which \[\optbs(\cartr(F))\ge\left(\sum_{i=0}^{\optstd(F)}\binom ri(k-1)^i\right)-1.\] 
\end{thm}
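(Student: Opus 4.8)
The plan is to exhibit, for every integer $M$ with $0 \le M \le r$, an explicit family $F$ with codomain $\{0,1,\dots,k-1\}$ satisfying $\optstd(F) = M$ whose cardinality equals the size of a Hamming ball of radius $M$, and then force the bandit learner to eliminate all but one member of $\cartr(F)$. Concretely, I would take domain $X = \{1,2,\dots,r\}$ and, for each vector $\mathbf{v} \in \{0,1,\dots,k-1\}^r$ of Hamming weight at most $M$, define $f_\mathbf{v}(j) = v_j$; let $F$ be the family of all such $f_\mathbf{v}$. Counting vectors of weight at most $M$ by first choosing the $i \le M$ nonzero coordinates and then assigning each one of the $k-1$ nonzero values gives $|F| = \sum_{i=0}^{M} \binom{r}{i}(k-1)^i$.

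Next I would verify $\optstd(F) = M$. For the upper bound, the learner guesses $0$ on every input $j$; since the hidden vector has at most $M$ nonzero entries and a wrong guess reveals the true coordinate value in the standard model, the learner errs on at most $M$ distinct coordinates, so $\optstd(F) \le M$. For the matching lower bound, the adversary queries the coordinates $1,2,\dots,M$ once each and, on input $j$, declares $v_j$ to be any value different from the learner's guess, which is possible because $k \ge 2$. Each answer forces a mistake, and the resulting vector is supported on $\{1,\dots,M\}$, hence has weight at most $M$ and lies in $F$; thus $\optstd(F) \ge M$, giving $\optstd(F) = M \le r$.

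For the bandit lower bound on $\optbs(\cartr(F))$, I would have the adversary repeatedly present the single input $(1,2,\dots,r) \in X^r$. On this input the function associated with $f_\mathbf{v}$ outputs exactly $\mathbf{v}$, so distinct functions of $\cartr(F)$ produce distinct outputs and a YES answer is possible only when the learner guesses the true vector. The adversary maintains the version space $V \subseteq F$ of functions consistent with its answers, initialized to all of $F$, and answers NO to every guess as long as $|V| \ge 2$; such a NO is always consistent (some function other than the one guessed still lies in $V$) and removes at most one candidate from $V$. Hence the learner cannot obtain a YES until $V$ is reduced to a single function, which requires at least $|F|-1$ NO-answers, forcing at least $|F|-1 = \left(\sum_{i=0}^{\optstd(F)} \binom{r}{i}(k-1)^i\right) - 1$ mistakes.

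I expect the only point requiring care to be the exact determination $\optstd(F) = M$ rather than merely $\optstd(F) \le M$, since the stated bound is indexed by $\optstd(F)$ and an overestimate of $|F|$ would result otherwise; the forcing argument in the standard model handles this, relying only on $k \ge 2$ to produce a value distinct from the learner's guess. The bandit argument itself is routine once one observes that the single input $(1,\dots,r)$ separates all functions of $F$, thereby reducing the bandit task to identifying a hidden element of an $|F|$-element set under pure YES/NO feedback, whose worst-case cost is $|F|-1$.
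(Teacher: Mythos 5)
Your proposal is correct and follows essentially the same route as the paper's proof: the paper uses the family of functions $\mathbb{Z}\to\{0,1,\dots,k-1\}$ that are nonzero on at most $M$ inputs, verifies $\optstd(F)=M$ by the same guess-$0$ strategy and adversarial forcing, and has the adversary repeat a single batch of $r$ inputs while denying every guess, so that each mistake eliminates at most one of the $\sum_{i=0}^{M}\binom{r}{i}(k-1)^i$ possible output tuples. Your only deviation --- restricting the domain to $\{1,\dots,r\}$ so that $F$ is finite and in bijection with the output tuples --- is cosmetic and changes nothing in the argument.
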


\begin{proof}
Let $F = F_{k,M}$ be the class consisting of all functions $\mathbb Z\to\{0,1,\dots,k-1\}$ that output $0$ on all but at most $M$ inputs. Note that $\optstd(F)\le M$ because of the learner's strategy of guessing $0$ on every new input. Also, $\optstd(F)\ge M$ because of the adversary's strategy of giving $M$ distinct queries and choosing a function such that all $M$ of the learner's responses are wrong.

Now, we prove a lower bound on $\optbs(\cartr(F))$. The adversary can repeatedly give the same batch of $r$ inputs, and claim that the learner's response is wrong until the learner has guessed all possible $r$-tuples of outputs.

The possible $r$-tuples of outputs are the elements of $\{0,1,\dots,k-1\}^r$ containing at most $M$ nonzero elements, and the number of such tuples is
\[\sum_{i=0}^M\binom ri(k-1)^i.\]

Thus, the adversary's strategy of asking about the same $r$-tuple repeatedly proves that the family $F_{k,M}$ satisfies
\[\optbs(\cartr(F))\ge\left(\sum_{i=0}^{\optstd(F)}\binom ri(k-1)^i\right)-1.\]
\end{proof}

\begin{cor}
There exist families $F$ with codomain of size $k$ and $\optstd(F) \le r$ for which $\optbs(\cartr(F)) \ge k^{\optstd(F)}-1$. 
\end{cor}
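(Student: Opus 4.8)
The plan is to derive this corollary directly from the preceding theorem, which already exhibits a family $F = F_{k,M}$ (with $M = \optstd(F) \le r$) satisfying
\[\optbs(\cartr(F)) \ge \left(\sum_{i=0}^{M} \binom{r}{i}(k-1)^i\right) - 1.\]
So the only remaining task is to show that the partial binomial sum $\sum_{i=0}^{M} \binom{r}{i}(k-1)^i$ is bounded below by $k^M$, after which the desired inequality $\optbs(\cartr(F)) \ge k^{\optstd(F)} - 1$ follows immediately.

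The key observation I would use is a termwise comparison of binomial coefficients: since $M \le r$, we have $\binom{r}{i} \ge \binom{M}{i}$ for every index $0 \le i \le M$. Replacing each $\binom{r}{i}$ by the smaller $\binom{M}{i}$ can only decrease the sum, so
\[\sum_{i=0}^{M} \binom{r}{i}(k-1)^i \;\ge\; \sum_{i=0}^{M} \binom{M}{i}(k-1)^i.\]
The right-hand side is now a \emph{complete} binomial expansion (the index runs all the way to $M$), so by the binomial theorem it equals $(1 + (k-1))^M = k^M$. Chaining these two facts with the theorem's bound gives $\optbs(\cartr(F)) \ge k^M - 1$, which is exactly the claim.

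There is essentially no genuine obstacle here; the entire content is the elementary inequality $\binom{r}{i} \ge \binom{M}{i}$ for $M \le r$ together with recognizing the truncated sum as a sub-expansion of $k^M$. The one point worth stating explicitly is why the truncation is harmless: the binomial expansion of $k^M$ has only $M+1$ terms (indices $0$ through $M$), precisely matching the range of the partial sum, so the partial sum over $\binom{r}{i}$ dominates it term by term with nothing left over. I would phrase the proof as a two-line computation invoking the previous theorem, and would not introduce any new construction, since the adversary strategy and the family $F_{k,M}$ are inherited wholesale from that result.
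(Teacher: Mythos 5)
Your proposal is correct and matches the paper's own proof essentially verbatim: both invoke the preceding theorem's bound and then apply the termwise inequality $\binom{r}{i} \ge \binom{\optstd(F)}{i}$ (valid since $\optstd(F) \le r$) followed by the binomial theorem to identify $\sum_{i=0}^{\optstd(F)} \binom{\optstd(F)}{i}(k-1)^i = k^{\optstd(F)}$. No differences of substance.
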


\begin{proof}
By the above theorem, there exist families that satisfy
\begin{align*}
\optbs(\cartr(F))&\ge\left(\sum_{i=0}^{\optstd(F)}\binom ri(k-1)^i\right)-1\\
&\ge\left(\sum_{i=0}^{\optstd(F)}\binom{\optstd(F)}i(k-1)^i\right)-1\\
&=k^{\optstd(F)}-1.
\end{align*}
\end{proof}



\begin{cor}
For any fixed $r \ge 1$, the maximum possible value of both $\optambr(F)$ and $\optbs(\cartr(F))$ for any family $F$ with codomain of size $k$ and $\optstd(F) \le r$ is $\Theta(k^{\optstd(F)})$, where the constants in the bound depend on $r$ and $\optstd(F)$. 
\end{cor}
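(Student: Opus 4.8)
The plan is to sandwich both quantities between matching upper and lower bounds, each of which has already been established earlier in this section, and then to observe that the multiplicative gap between them is a constant depending only on $r$ and $\optstd(F)$. The key structural fact I would lean on is the inequality $\optbs(\cartr(F)) \le \optambr(F)$ noted in the introduction, which holds for every family $F$; this means that any upper bound on $\optambr(F)$ and any lower bound on $\optbs(\cartr(F))$ automatically pinch both quantities at once.

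First I would fix a value $M = \optstd(F) \le r$. For the upper bound, I would invoke the theorem giving $\optambr(F) \le (rk)^{\optstd(F)}$. Writing $(rk)^M = r^M k^M$ and using that $r$ and $M$ are fixed, the factor $r^M$ is a constant, so $\optambr(F) = O(k^M)$, and consequently $\optbs(\cartr(F)) \le \optambr(F) = O(k^M)$ as well. For the lower bound, I would invoke the corollary producing families $F$ with codomain of size $k$ and $\optstd(F) = M$ for which $\optbs(\cartr(F)) \ge k^M - 1 = \Omega(k^M)$; combining this with $\optambr(F) \ge \optbs(\cartr(F))$ transfers the same lower bound to $\optambr(F)$. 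Taking the maximum over all admissible families, both maxima therefore lie between $\Omega(k^M)$ and $O(k^M)$, which is exactly $\Theta(k^{\optstd(F)})$ with constants depending on $r$ and $\optstd(F)$.

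There is essentially no obstacle here, since the content resides entirely in the two cited results; the only points requiring a moment of care are confirming that the family $F_{k,M}$ constructed in the lower bound corollary attains $\optstd(F) = M$ exactly (not merely $\le r$), so that the exponents on $k$ genuinely match, and verifying that the constant factor $r^M$ from the upper bound is correctly absorbed into the $r$- and $\optstd(F)$-dependent constant hidden in the $\Theta$ notation.
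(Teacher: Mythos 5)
Your proposal is correct and matches the paper's (implicit) argument exactly: the corollary is stated in the paper without proof, as the immediate combination of the upper bound $\optambr(F) \le (rk)^{\optstd(F)}$, the inequality $\optbs(\cartr(F)) \le \optambr(F)$, and the lower bound $\optbs(\cartr(F)) \ge k^{\optstd(F)}-1$ from the preceding corollary. Your two points of care are also well placed, since the paper's construction $F_{k,M}$ does satisfy $\optstd(F_{k,M}) = M$ exactly, and the factor $r^M$ is indeed absorbed into the $r$- and $\optstd(F)$-dependent constants.
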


\section{Closure bounds}\label{s:closure}

Recall that Auer and Long \cite{auer} showed that \[\optstd(\compose(F_1, F_2, \dots, F_k, G)) \le 2.41 (1+o(1))(\log_2{k})(\optstd(G)+\sum_{i = 1}^k \optstd(F_i)).\] In addition, they found families of functions $F_1, F_2, \dots, F_k$ with $\optstd(F_i) = a_i$ and a function $g$ such that \[\optstd(\compose(F_1, F_2, \dots, F_k, g)) \ge \sum_{i = 1}^k \optstd(F_i).\] In this section, we improve the leading constant in the upper bound from $2.41$ to $1$. In addition, we construct a sequence $F_1,F_2,\dots,F_k$ of classes of functions and a function $g$ such that $\optstd(F_i)\leq M$ for all $i\in\{1,2,\dots,k\}$ and
\[\optstd(\compose(F_1, F_2, \dots, F_k, g)) \geq \frac12kM\log_2(k).\] Note that this improves Auer and Long's lower bound by a factor of $\Theta(\ln{k})$, and shows that the upper bound on $\optstd(\compose(F_1, F_2, \dots, F_k, g))$ is within a constant factor of optimal, whenever the values of $\optstd(F_i)$ are close to each other for all $1 \le i \le k$.

\begin{thm}
For any $k$ families of functions $F_1, F_2, \dots, F_k$ with domain $X$ and codomain $\left\{0,1\right\}$ and for any family of functions $G$ with domain $\left\{0,1\right\}^k$ and codomain $\left\{0,1\right\}$, we have \[\optstd(\compose(F_1, F_2, \dots, F_k, G)) \leq (1+o(1)) \log_2(k)(\optstd(G)+\sum_{i = 1}^k \optstd(F_i)).\]
\end{thm}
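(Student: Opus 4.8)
The plan is to use the weighted-majority-voting framework of Section~\ref{s:upperstrat}, with the variant flagged there in which each ``copy'' is a tuple of sub-learners rather than a single copy of $A_s$. Concretely, I would let a copy consist of a standard-model learner $A_G$ for $G$ together with standard-model learners $A_{F_1},\dots,A_{F_k}$ for $F_1,\dots,F_k$, making at most $\optstd(G)$ and $\optstd(F_i)$ mistakes respectively. On an input $x\in X$, a copy has each $A_{F_i}$ predict a bit $\hat b_i$, forms the vector $\hat{\mathbf b}=(\hat b_1,\dots,\hat b_k)$, and then outputs $A_G(\hat{\mathbf b})\in\{0,1\}$ as its vote. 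The composite learner $A_b$ outputs the weighted-majority vote over the active copies, breaking ties randomly; since the codomain is $\{0,1\}$, the winning output always carries at least half of the total weight.

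When $A_b$ errs, so the winning output is the wrong bit $1-y$, I would deactivate each copy that voted for it and give that copy at most $k+1$ offspring, each of weight $\alpha$ times the parent's weight, corresponding to the single-component explanations of the error: one offspring updates $A_G$ with the feedback that $g(\hat{\mathbf b})=y$ (the ``$g$ erred'' explanation), and for each $i$ one offspring flips the $i$th intermediate bit by feeding $A_{F_i}$ the value $1-\hat b_i$ on input $x$ (the ``$f_i$ erred'' explanation); offspring with no consistent hypothesis are simply not created, so there are at most $k+1$. I would set $\alpha=\tfrac{1}{k\ln k}$. The point to emphasize is that an offspring is \emph{not} required to reproduce the correct composite output $y$ after its single correction; it merely records one additional component correction.

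The heart of the argument is the invariant that there is always an active leaf all of whose recorded corrections are genuine, i.e.\ agree with the true $g^\ast,f_1^\ast,\dots,f_k^\ast$. I would prove this by tracking one such leaf $L$. If $A_b$ errs and $L$ did not vote for the winning bit, then $L$ predicted correctly and survives unchanged. If $L$ did vote for it, then $L$ genuinely mispredicted, so either all $\hat b_i$ equal the true bits $b_i^\ast=f_i^\ast(x)$, in which case $A_G$ genuinely erred on $\hat{\mathbf b}=\mathbf b^\ast$ and the ``$g$ erred'' offspring is a correct leaf, or some $\hat b_i\neq b_i^\ast$, in which case $1-\hat b_i=b_i^\ast$ (binary bits) and the ``$f_i$ erred'' offspring corrects $A_{F_i}$ truthfully and is a correct leaf. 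Either way the new correct leaf has exactly one more correction than $L$. The subtlety I expect to be the main obstacle is bounding the depth of the correct leaf by $C:=\optstd(G)+\sum_i\optstd(F_i)$ even though several components may be wrong in the same round while only one is corrected. I would resolve this by feeding each sub-learner only on the rounds where the error is attributed to it: the rounds attributed to $A_G$ form a valid standard-model run on the consistent pairs $(\mathbf b^\ast,y)$ on which $A_G$ errs every time, so there are at most $\optstd(G)$ of them, and likewise the rounds attributed to a given $A_{F_i}$ form a consistent run on which $A_{F_i}$ errs every time, giving at most $\optstd(F_i)$. Summing bounds the correct leaf's depth by $C$, and a depth-$d$ leaf has weight $\alpha^d\ge\alpha^C$, so the total active weight $W_t$ always satisfies $W_t\ge\alpha^C$.

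Finally I would run the usual potential argument. On each mistake the winning (wrong) copies, of total weight at least $\tfrac12 W_t$, are replaced by offspring of total weight at most $(k+1)\alpha$ times as much, so $W_{t+1}\le\big(\tfrac{1+(k+1)\alpha}{2}\big)W_t$ since $(k+1)\alpha<1$ for large $k$. After $m$ mistakes this gives $\big(\tfrac{1+(k+1)\alpha}{2}\big)^m\ge W_t\ge\alpha^C$, hence $m\le \frac{C\ln(1/\alpha)}{\ln\big(2/(1+(k+1)\alpha)\big)}$. With $\alpha=\tfrac{1}{k\ln k}$ we have $(k+1)\alpha\to 0$ and $\ln(1/\alpha)=(1+o(1))\ln k$, so the denominator tends to $\ln 2$ and the bound becomes $m\le(1+o(1))\,\log_2(k)\,C$, which is exactly the claimed inequality (the $\compose(F_1,\dots,F_k,g)$ bound then follows by taking $G=\{g\}$, so that $\optstd(G)=0$).
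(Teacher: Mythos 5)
Your proposal is correct and follows essentially the same route as the paper's proof: tuples of standard sub-learners voting by weighted majority, at most $k+1$ single-component ``explanation'' offspring per erring tuple, and the potential argument with $\alpha = \Theta\left(\frac{1}{k\ln k}\right)$ (the paper takes $\alpha=\frac{1}{(k+1)\ln k}$, an immaterial difference). Your explicit attribution of error rounds to individual sub-learners, bounding the correct leaf's depth by $\optstd(G)+\sum_{i=1}^k\optstd(F_i)$, is exactly the invariant the paper establishes, just spelled out more carefully.
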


\begin{proof}
Here we use a variant of the general learning strategy in Section~\ref{s:upperstrat}. For each $i = 1, 2, \dots, k$, let $A_{s, i}$ be a copy of a learning algorithm for $F_i$ that is guaranteed to make at most $\optstd(F_i)$ errors in the standard learning scenario for $F_i$. Moreover, let $A_{s, G}$ be a copy of a learning algorithm for $G$ that is guaranteed to make at most $\optstd(G)$ errors in the standard learning scenario for $G$. We form a $(k+1)$-tuple $(A_{s, 1}, A_{s,2}, \dots, A_{s, k}, A_{s, G})$ where the $i^{\text{th}}$ coordinate is the learning algorithm $A_{s, i}$ for each $1 \le i \le k$ and the $(k+1)^{\text{st}}$ coordinate is $A_{s, G}$. We refer to the initial tuple as $R$, and none of the algorithms in the tuple $R$ have any information about the outputs of the hidden function $g(f_1, f_2, \dots, f_k)$. We construct a learning algorithm $A_b$ for learning $\compose(F_2, \dots, F_k, G)$ in the standard model. In each round, given an input $x$, the learning algorithm $A_b$ outputs the winner of a weighted vote of the currently active tuples. The $i^{\text{th}}$ coordinate of each tuple votes for a predicted value $q_i$ of $f_i(x)$ for each $1 \le i \le k$, and the $(k+1)^{\text{st}}$ coordinate of each tuple votes for a predicted value $p$ of $g(q_1, q_2, \dots, q_k)$. 

Every time that $A_b$ gets a NO answer from the adversary in some round $t$ and an active tuple with weight $w$ voted for the winning wrong answer, split $A$ into at most $k+1$ new active tuples which inherit the same memory as $A$ before the round $t$, each with weight $\alpha w$, and make $A$ inactive. In particular, the coordinates of each new tuple $A'_i$ born from $A$ remember everything that the corresponding coordinates of $A$ remembered before round $t$. In addition, for each $1 \le i \le k$, the $i^{\text{th}}$ coordinate of $A'_i$ remembers the input $x$ from round $t$, the guess $q_i$ of the $i^{\text{th}}$ coordinate of $A$ for the output of $f_i(x)$ which is treated as a wrong guess in the memory, and a possible output for $f_i(x)$ (equal to 0 if $q_i = 1$ and 1 if $q_i = 0$) which might not be the true output but is consistent with the family $F_i$ and $A'_i$'s memory of the wrong guess. If there is no possible output that is consistent with the family $F_i$ and $A'_i$'s memory of the wrong guess, then $A'_i$ does not exist. The $(k+1)^{\text{st}}$ coordinate of the $(k+1)^{\text{st}}$ new tuple $A'_{k+1}$ born from $A$ remembers the input $(q_1, q_2, \dots, q_k)$ to the $(k+1)^{\text{st}}$ coordinate of $A$, the guess $p$ of the $(k+1)^{\text{st}}$ coordinate of $A$ for the output of $g(q_1, q_2, \dots, q_k)$ which is treated as a wrong guess in the memory, and a possible output for $g(q_1, q_2, \dots, q_k)$ (equal to 0 if $p = 1$ and 1 if $p = 0$) which might not be the true output but is consistent with the family $G$ and $A'_{k+1}$'s memory of the wrong guess. If there is no output that is consistent with the family $G$ and $A'_{k+1}$'s memory of the wrong guess, then $A'_{k+1}$ does not exist.


Fix $\alpha = \frac{1}{(k+1) \ln{k}}$. If $W_t$ is the total weight of the active tuples before round $t$, then we must have $W_t \ge \alpha^{\optstd(G)+\sum_{i = 1}^k \optstd(F_i)}$ since at least one active tuple always gets the correct outputs. Indeed, in every round of the learning process, there is some active leaf that gets only correct outputs. In particular, when the process starts, the only active leaf is $R$ and $R$ gets no information about the outputs. At the start of any round of the process, if $A$ is a tuple which is an active leaf in the tree that has only received correct outputs, then $A$ will either be an active leaf at the end of the round or it will split into new active leaves assuming that it votes for the wrong output and it has a winning vote. If $A$ votes for the wrong output and it has a winning vote, then at least one of its coordinates must get the wrong output, which means that one of the new active leaves $A'$ that is an offspring of $A$ will get the correct output, as well as all of the correct outputs from $A$. Thus, $A'$ only gets correct outputs. 

If $A_b$ gets the wrong output in round $t$, then tuples with total weight at least $\frac{W_t}{2}$ are cloned to make at most $k+1$ copies which each have weight $\alpha$ times their old weight. Suppose that $k$ is sufficiently large so that $\alpha (k+1) < 1$. Thus, we have \[W_{t+1} \le \frac{1}{2}W_t + \frac{1}{2}\alpha (k+1) W_t.\] Thus after $A_b$ has made $m$ mistakes, we have \[W_t \le \left(\frac{1}{2}+\frac{1}{2 \ln{k}}\right)^m.\] Therefore \[\left(\frac{1}{2}+\frac{1}{2 \ln{k}}\right)^m \ge \alpha^{\optstd(G)+\sum_{i = 1}^k \optstd(F_i)},\] so we can bound the number of mistakes by \begin{align*}\begin{split} m &\le (1+o(1))\log_2\left(\frac{1}{\alpha}\right)(\optstd(G)+\sum_{i = 1}^k \optstd(F_i)) \\
&= (1+o(1)) \log_2(k)(\optstd(G)+\sum_{i = 1}^k \optstd(F_i)).\end{split}\end{align*}
\end{proof}

\begin{cor}\label{cor:compose}
For any $k$ families of functions $F_1, F_2, \dots, F_k$ with domain $X$ and codomain $\left\{0,1\right\}$ and for any function $g$ with domain $\left\{0,1\right\}^k$ and codomain $\left\{0,1\right\}$, we have, \[\optstd(\compose(F_1, F_2, \dots, F_k, g)) \leq (1+o(1)) \log_2(k)\sum_{i = 1}^k \optstd(F_i).\]
\end{cor}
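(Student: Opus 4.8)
The plan is to derive this corollary directly from the preceding theorem by specializing to the case where the family $G$ consists of a single function. First I would recall the definition given in the introduction, namely $\compose(F_1, F_2, \dots, F_k, g) = \compose(F_1, F_2, \dots, F_k, \{g\})$. Under this identification, the corollary is exactly the statement of the theorem with the family $G$ taken to be the singleton $\{g\}$, whose domain $\{0,1\}^k$ and codomain $\{0,1\}$ match the hypotheses of the theorem.

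The key observation is that $\optstd(\{g\}) = 0$. Indeed, since the family $\{g\}$ contains only one function, the learner knows $g$ from the outset in the standard model and can therefore predict $g(q_1, q_2, \dots, q_k)$ correctly on every input without ever making a mistake. Substituting $\optstd(G) = 0$ into the bound
\[\optstd(\compose(F_1, F_2, \dots, F_k, G)) \leq (1+o(1))\log_2(k)\left(\optstd(G)+\sum_{i=1}^k \optstd(F_i)\right)\]
from the theorem, the $\optstd(G)$ term vanishes and we immediately obtain
\[\optstd(\compose(F_1, F_2, \dots, F_k, g)) \leq (1+o(1))\log_2(k)\sum_{i=1}^k \optstd(F_i),\]
which is precisely the claimed inequality.

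The main, and essentially only, obstacle is verifying that $\optstd(\{g\}) = 0$, and this is immediate from the definition of the standard model once one notes that a one-element family leaves the learner with no uncertainty about the target function. No further computation is required, so the corollary follows by direct specialization of the theorem.
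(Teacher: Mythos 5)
Your proposal is correct and matches the paper's intended derivation: the paper states the corollary without explicit proof precisely because it follows by specializing the preceding theorem to $G = \{g\}$, using $\compose(F_1,\dots,F_k,g) = \compose(F_1,\dots,F_k,\{g\})$ and the immediate fact that $\optstd(\{g\}) = 0$ for a singleton family. Your verification of that fact and the substitution into the theorem's bound is exactly the right (and complete) argument.
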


In the next result, we show that the upper bound in Corollary~\ref{cor:compose} is within a constant factor of optimal whenever the values of $\optstd(F_i)$ are approximately equal for all $1 \le i \le k$.

\begin{thm}
For any positive integers $k$ and $M$ such that $k$ is a power of $2$, there exist $k$ families $F_1,F_2,\dots,F_k$ with domain $X$ and codomain $\left\{0,1\right\}$ and a function $g$ with domain $\left\{0,1\right\}^k$ and codomain $\left\{0,1\right\}$ such that $\optstd(F_i)\leq M$ for all $i\in\{1,2,\dots,k\}$ and
\[\optstd(\compose(F_1, F_2, \dots, F_k, g)) \geq \frac12kM\log_2(k).\]
\end{thm}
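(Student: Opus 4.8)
The plan is to lower-bound $\optstd(\compose(F_1,\dots,F_k,g))$ by exhibiting an explicit adversary, that is, by building a deep mistake tree for the composite class $C=\compose(F_1,\dots,F_k,g)$, using that $\optstd(C)$ equals the Littlestone dimension $\mathrm{Ldim}(C)$. The first thing I would record is the basic obstruction that dictates the entire design: for any class, $\optstd(C)=\mathrm{Ldim}(C)\le\log_2|C|\le\sum_{i=1}^k\log_2|F_i|$, since a composite function is determined by the tuple $(f_1,\dots,f_k)$ together with the fixed $g$. Hence, to beat Auer and Long's bound $\sum_i\optstd(F_i)=kM$, one \emph{must} use families whose cardinality is far larger than $2^M=2^{\optstd(F_i)}$. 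This rules out the extremal "binary-search" families (thresholds on $[2^M]$, which have cardinality exactly $2^M$) and tells me to use high-cardinality, low-Littlestone-dimension families.

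Concretely, I would let all $F_i$ share a domain of size $n\approx\sqrt k$ and take $F_i$ to be the class of $M$-sparse $\{0,1\}$-functions (at most $M$ ones). The constant guess $0$ gives $\optstd(F_i)\le M$, while a shattered set of size $M$ gives $\optstd(F_i)=M$; crucially $\log_2|F_i|=\log_2\binom{n}{\le M}\approx\tfrac M2\log_2 k$, so that $\sum_i\log_2|F_i|\approx\tfrac12 kM\log_2 k$, which is exactly the target. For the combining function I would take $g$ to be parity. The role of parity together with sparse families is \emph{mixing}: on a fresh input roughly $M\sqrt k$ of the $k$ coordinates are active, so over the product version space the composite value behaves like an unbiased coin, and no single mistake lets the learner attribute blame to a particular family.

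The adversary then maintains the product version space $V\subseteq\prod_i F_i$, and at each step presents a fresh input on which the composite is (near-)balanced over $V$, declares the learner wrong, and retains the consistent half; thus each forced mistake deletes at most half of $V$. Iterating until $V$ is a singleton forces at least $\log_2|V_{\textnormal{initial}}|=\sum_i\log_2|F_i|\gtrsim\tfrac12 kM\log_2 k$ mistakes. To read off the clean constant rather than an asymptotic one, I would organize the count recursively for $k=2^b$ via $L_b\ge 2L_{b-1}+2^{b-1}M$: two size-$2^{b-1}$ sub-instances contribute $2L_{b-1}$ by the superadditive/concatenation argument already used in this paper, and a top mixing layer on the two halves forces the extra $2^{b-1}M=\tfrac k2M$ mistakes. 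Solving with $L_0=0$ yields $L_b\ge\tfrac12\,2^b bM=\tfrac12 kM\log_2 k$, the factor $\tfrac{kb}{2}$ matching the number of edges of the hypercube $Q_b$, each weighted by $M$.

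The main obstacle is the persistence of balanced splitters. When $V$ is the full product it is clear that some fresh input splits it nearly evenly, but as the game proceeds and some $f_i$ become partially pinned down, I must show that enough coordinates remain active at some unused input to still guarantee a near-even split, all the way down until the full $\tfrac12 kM\log_2 k$ mistakes have accumulated; equivalently, I must verify that the top mixing layer genuinely forces its $\tfrac k2M$ extra mistakes at each merge without the version space collapsing prematurely. Controlling the lower-order bias in these splits, together with the gap between $\log_2\binom{n}{\le M}$ and $\tfrac M2\log_2 k$, is the delicate part that must be handled carefully in order to land the exact constant $\tfrac12$.
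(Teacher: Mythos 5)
Your construction has a fatal flaw before the delicate part you flag even comes into play: you place all $k$ families on a \emph{common} domain of size $n\approx\sqrt{k}$, and in the composition model the composite class $C=\compose(F_1,\dots,F_k,g)$ then also has domain of size $\sqrt{k}$. In the standard model the adversary reveals the true label after every guess, so a repeated input can never cost the learner another mistake, giving $\optstd(C)\le|X|\approx\sqrt{k}$ --- far below the target $\tfrac12 kM\log_2 k$. (Your count of ``$M\sqrt{k}$ active coordinates on a fresh input'' confirms you intend a single shared input fed to all $f_i$, so this objection applies as stated.) The paper avoids exactly this trap: it takes $X=(T\cup\{x\})^k$ with $|T|=kM$, lets $f_{i,S}$ read only the $i^{\text{th}}$ coordinate of the tuple (with $|S|=M$ exactly, so $\log_2|F_i|\approx M\log_2 k$ and no $\sqrt{k}$ sizing is needed --- the theorem extracts only half the entropy budget anyway), and uses the null symbol $x$ to activate chosen coordinate blocks. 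A product-structured domain of this kind is not an implementation detail; it is what makes a superlogarithmic-in-$|X|$ mistake bound possible at all.

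Even granting a large domain, the core of your argument is a program rather than a proof. The step ``each forced mistake deletes at most half of $V$, so the adversary forces $\log_2|V_{\text{initial}}|$ mistakes'' requires a (near-)balanced splitter to exist at \emph{every} stage of the game, and since $\optstd(C)=\mathrm{Ldim}(C)$ can be exponentially smaller than $\log_2|C|$, the persistence of such splitters is not generic --- it is precisely the content of the theorem, and you explicitly defer it. Your recursion $L_b\ge 2L_{b-1}+2^{b-1}M$ likewise leans on the paper's concatenation/superadditivity lemma, but that lemma requires sub-instances on disjoint domains, which your shared-domain setup does not provide, and with $g=$ parity it is not clear the two halves decouple at all. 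The paper's proof replaces all of this with an explicit adversary: with $g=$ OR, the first layer presents $(2i-1,\dots,2i-1,2i,\dots,2i)$ for $1\le i\le \tfrac{kM}{2}$, and each NO answer is consistent with exactly two placements of the pair $\{2i-1,2i\}$ across the left and right halves of the coordinates; this both forces $\tfrac{kM}{2}$ mistakes and produces a partition $T=T_1\cup T_2$ such that the surviving version space factors into two \emph{independent} copies of the original problem with $k/2$ coordinates each (isolated in later layers via the null input $x$), which is what legitimizes $L(k)=\tfrac{kM}{2}+2L(k/2)$. To salvage your approach you would need to (i) move to a product domain and (ii) prove balanced splitters survive throughout, at which point you would essentially be reconstructing the paper's explicit pair-splitting recursion.
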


\begin{proof}
Let $T$ be a set with $kM$ elements and $x\notin T$. Choose the domain $X=(T\cup\{x\})^k$. Let $f_{i,S}$ be the function that returns $1$ on each tuple in $X$ whose $i^{\text{th}}$ term is in $S$ and $0$ on all other tuples in $X$, where $S$ is a subset of $T$. We select $F_i=\{f_{i,S}:S\subseteq T,|S|=M\}$. It is clear that $\optstd(F_i)\leq M$ due to the learner's strategy of always guessing $0$ except when previous feedback makes an output of $0$ impossible. We select $g$ to be the bitwise OR function.

We now supply the adversary's recursive strategy to force the learner to make at least $\frac12kM\log_2(k)$ mistakes. For ease of reference, we let $T=\{1,2,\dots,kM\}$ at the top level of the recursion. If $k=1$, then the adversary's task is already done because $\log_2(k)=0$. Now assume otherwise. The adversary first restricts the space of allowed functions to the functions $g(f_{1,S_1},f_{2,S_2},\dots,f_{k,S_k})$ such that $S_1,S_2,\dots,S_k$ partition $T$. Since $x \notin T$, note that $x \notin S_i$ for each $i = 1, 2, \dots, k$. We claim that even after the restriction, the adversary can still force $\frac 12kM\log_2 k$ mistakes. We assume the equivalent statement for $\frac k2$ as an inductive hypothesis. In the first group of inputs, the adversary supplies the input $(2i-1,2i-1,\dots,2i-1,2i,2i,\dots,2i)$ for each $1\leq i\leq \frac{kM}2$, where $2i-1$ and $2i$ each appear $\frac k2$ times. After each guess, the adversary says that the learner was wrong. The adversary then partitions $T$ into sets $T_1$ and $T_2$ with $\frac{kM}2$ elements each. If the learner guessed $0$ on input $i$, then $2i-1$ goes in $T_1$ and $2i$ goes in $T_2$. If the learner guessed $1$ on input $i$, then vice versa.

Note that any function $g(f_{1,S_1},f_{2,S_2},\dots,f_{k,S_k})$ with the input $(2i-1,2i-1,\dots,2i-1,2i,2i,\dots,2i)$ has an output of $1$ whenever the statement \[\left((2i-1) \in \bigcup_{1 \le i \le k/2} S_i\right) \vee \left((2i) \in \bigcup_{k/2+1 \le i \le k} S_i\right)\] is true and it has an output of $0$ otherwise. Thus, a function $g(f_{1,S_1},f_{2,S_2},\dots,f_{k,S_k})$ is consistent with the adversary's responses provided that $S_1,\dots,S_{k/2}$ partition $T_1$ and that $S_{k/2+1},\dots,S_k$ partition $T_2$. Therefore, the adversary can now split the learner's task into two which are equivalent to the original, except with $k$ replaced by $\frac k2$. 

In particular, let $i \in \left\{1, \dots, \log_2(k) \right\}$ and suppose that before the $i^{\text{th}}$ group of inputs, the adversary obtains a partition $\left\{T_b\right\}_{b \in \left\{1,2\right\}^{i-1}}$ of $T$ for which each $T_b$ contains $\frac{k M}{2^{i-1}}$ elements. For the $i^{\text{th}}$ group of inputs, the adversary supplies $\frac{kM}2$ inputs which each have two distinct numbers, with each number occurring in $\frac{k}{2^{i}}$ consecutive coordinates and $x$ occurring in the other $k\left(1-\frac{1}{2^{i-1}}\right)$ coordinates. Specifically, for each positive integer $j \le 2^{i-1}$, the $\left(\frac{(j-1)k M}{2^i}+1\right)^{\text{st}}$ through $\left(\frac{j k M}{2^i}\right)^{\text{th}}$ inputs in the $i^{\text{th}}$ group of inputs have $x$ in all coordinates except those with indices $\frac{(j-1)k}{2^{i-1}}+1, \dots, \frac{j k}{2^{i-1}}$. The two distinct numbers in the $\left(\frac{(j-1)k M}{2^i}+r\right)^{\text{th}}$ input are the $(2r-1)^{\text{st}}$ and $(2r)^{\text{th}}$ elements of $T_{b(j)}$ in increasing order, where $b(j)$ is the element of $\left\{1,2\right\}^{i-1}$ obtained from the binary string of length $i-1$ representing $j-1$ by replacing every $0$ with $1$ and every $1$ with $2$. The learner's guesses and the adversary's responses allow the adversary to partition each $T_b$ into $T_{b 1}$ and $T_{b 2}$, each with $\frac{k M}{2^i}$ elements.

By the inductive hypothesis, the groups of inputs after the first group allow the adversary to force $2 \cdot \frac 12\left(\frac k2\right)M\log_2\left(\frac k2\right)$ more mistakes. Adding the original $\frac{kM}2$ from the first group, we get a total of
\[\frac{kM}2+2\cdot\frac 12\left(\frac k2\right)M\log_2\left(\frac k2\right)=\frac12kM\log_2(k)\]
mistakes, as desired.
\end{proof}

In the last proof, note that the element $x$ was used as a ``null" input in layers of the recursion other than the top layer. For example, if $k=8$ and $M=1$, a possible series of exchanges between the adversary and learner is as follows:

The adversary supplies the $4$ inputs $(1,1,1,1,2,2,2,2),\dots,(7,7,7,7,8,8,8,8)$ in the top layer of the recursion. The learner guesses $0$ on each, and the adversary claims that the correct answer on each is actually $1$. Thus, $T_1=\{1,3,5,7\}$ and $T_2=\{2,4,6,8\}$.

In the next layer of the recursion, the adversary supplies the $4$ inputs $(1,1,3,3,x,x,x,x)$, $(5,5,7,7,x,x,x,x)$, $(x,x,x,x,2,2,4,4)$, and $(x,x,x,x,6,6,8,8)$. Suppose again that the learner guesses $0$ on each and the adversary claims that the correct answer on each is actually $1$.

In the third and final layer, the adversary supplies the $4$ inputs $(1,5,x,x,x,x,x,x)$, $(x,x,3,7,x,x,x,x)$, $(x,x,x,x,2,6,x,x)$, and $(x,x,x,x,x,x,4,8)$. No matter what outputs the learner guesses, the adversary can deny all of them.

\section{Discussion}\label{s:discuss}

In the bandit learning scenario, we showed that \[\frac{1}{8 \ln{2}}k \ln{k} (1-o(1)) \le \optbs(k,2) \le k \ln{k} (1+o(1)).\] Note that there is a multplicative gap of $8 \ln{2}$ between the upper and lower bounds. It remains to determine the exact value of $\optbs(k, M)$ for all $M \ge 2$. Even finding the value of \[\lim_{M \rightarrow \infty} \frac{\optbs(k, M)}{M}\]  is an open problem for all fixed $k \ge 2$. By Corollary~\ref{fekete}, this limit exists for all fixed $k \ge 1$. Determining whether the limit \[\lim_{k \rightarrow \infty} \frac{\optbs(k, M)}{k \ln{k}}\] exists is also an open problem for all fixed $M \ge 2$.

For agnostic feedback, we showed that \[\optag(F,\eta) \le k \ln{k}(1+o(1))(\optstd(F)+\eta)\] when the codomain of the functions in $F$ is $\left\{0,1, \dots,k-1\right\}$. On the other hand, we found families of functions $F$ with codomain $\left\{0,1, \dots,k-1\right\}$ for which \[\optag(F,\eta)\geq(0.5-o(1))((k\ln k)\optstd(F)+k\eta).\] It remains to determine the maximum possible value of $\optag(F, \eta)$ for families of functions $F$ with codomain $\left\{0,1, \dots,k-1\right\}$ and $\optstd(F) = M$, as a function of $M$, $k$, and $\eta$. In particular, if we let $\optas(k, M, \eta)$ denote the maximum possible value of $\optag(F, \eta)$ for families of functions $F$ with codomain $\left\{0,1, \dots,k-1\right\}$ and $\optstd(F) = M$, then what is the value of \[\lim_{k \rightarrow \infty} \frac{\optas(k, M, \eta)}{k \ln{k}}\] for all fixed $M, \eta \ge 1$?

In the delayed ambiguous reinforcement scenario, we improved the upper bound on $\optambr(F)$ of $(1+o(1))(k^r r\ln{k}) \optstd(F)$ from Feng et al. \cite{feng} for families of functions $F$ with codomain $\left\{0,1,\dots,k-1\right\}$ to $(1+o(1))(k^r \ln{k})\optstd(F)$, where the $o(1)$ is with respect to $k$. This is sharp up to the leading term when $\optstd(F) \ge 2r$, using an adversary strategy from Feng et al. \cite{feng}. Furthermore, we showed that the maximum possible value of both $\optambr(F)$ and $\optbs(\cartr(F))$ is $\Theta(k^{\optstd(F)})$ over all families $F$ of functions with codomain of size $k$ and $\optstd(F) \le r$, where the constants in the upper and lower bounds depend on $r$ and $\optstd(F)$. There is a multiplicative gap of at most $r^{\optstd(F)}$ between our upper and lower bounds for this maximum.

As for closure bounds, we showed that \[\optstd(\compose(F_1, F_2, \dots, F_k, G)) \le (1+o(1))(\log_2{k})(\optstd(G)+\sum_{i = 1}^k \optstd(F_i)),\] \[\optstd(\compose(F_1, F_2, \dots, F_k, g)) \le (1+o(1))(\log_2{k})\sum_{i = 1}^k \optstd(F_i),\] and for any positive integers $k$ and $M$ with $k$ a power of $2$, there exist $k$ families $F_1,F_2,\dots,F_k$ with domain $X$ and codomain $\left\{0,1\right\}$ and a function $g$ with domain $\left\{0,1\right\}^k$ and codomain $\left\{0,1\right\}$ such that $\optstd(F_i)\leq M$ for all $i\in\{1,2,\dots,k\}$ and \[\optstd(\compose(F_1, F_2, \dots, F_k, g)) \geq \frac12kM\log_2(k).\] Let $\optcs(k, M)$ denote the maximum possible value of $\optstd(\compose(F_1, F_2, \dots, F_k, g))$ over all families $F_1,F_2,\dots,F_k$ with some domain $X$ and codomain $\left\{0,1\right\}$ and functions $g$ with domain $\left\{0,1\right\}^k$ and codomain $\left\{0,1\right\}$ such that $\optstd(F_i)\leq M$ for all $i\in\{1,2,\dots,k\}$. The proofs of our bounds imply that \[\frac14kM\log_2(k) \le \optcs(k, M) \le (1+o(1))k M\log_2{k}.\] In particular, there is a multiplicative gap of $4$ between our upper and lower bounds on $\optcs(k, M)$.

Another possible direction for future research is online learning of special families of functions. For example, Feng et al. \cite{feng} obtained some results in this direction for families of permutations and non-decreasing functions. Geneson and Zhou \cite{gz} also found some results for families of exponential functions and polynomials, though they were studying a model of online learning for smooth functions that was introduced by \cite{kl, longsmooth}. Injections, surjections, polynomials, and exponential functions are some families that could be investigated for the models of online learning studied in \cite{auer, geneson, long} and this paper. Alternatively, it would be interesting to investigate online learning scenarios where there are computational restrictions on the learning algorithms.\\

\noindent {\bf Acknowledgements.} Most of this research was performed in PRIMES 2023. We thank the organizers for this opportunity. We also thank Idan Mehalel for helpful comments on this paper.


\newpage

\end{document}